\theoremstyle{plain}
\newtheorem{theorem}{Theorem}
\newtheorem{lemma}[theorem]{Lemma}
\newtheorem{corollary}[theorem]{Corollary}
\theoremstyle{definition}
\newtheorem{definition}{Definition}
\newtheorem{assumption}{Assumption}
\theoremstyle{remark}
\newtheorem{remark}{Remark}
\newcommand{\iid}{i.\@i.\@d.\@ }
\newcommand{\ie}{\emph{i.e.}, }
\newcommand{\ud}{\mathrm{d}}
\providecommand{\abs}[1]{\left\lvert#1\right\rvert}
\providecommand{\eps}{\varepsilon}
\newcommand{\E}{{\mathbb E}}
\renewcommand{\P}{{\mathbb P}}
\newcommand{\neww}[1]{}
\DeclareMathOperator{\VAR}{V{@}R}
\DeclareMathOperator{\AVAR}{AV{@}R}
\DeclareMathOperator{\var}{\mathbb V}
\renewcommand{\hat}{\widehat}
\renewcommand{\leq}{\leqslant}
\renewcommand{\geq}{\geqslant}
\title{Functional Bandits}
 \author{Long Tran-Thanh\thanks{ECS, Faculty of Physical Sciences and Engineering,
University of Southampton, Southampton, United Kingdom
(ltt08r@ecs.soton.ac.uk).} \@{} and Jia Yuan Yu\thanks{IBM Research Ireland,
     Damastown, Dublin, Ireland (jy@osore.ca). This work is supported in part by the EU FP7 project INSIGHT under grant 318225.}}
\begin{document}

\maketitle

\begin{abstract}
We introduce the functional bandit problem, where the objective
is to find an arm that optimises a known functional of the unknown arm-reward distributions.
These problems arise in many settings such as maximum entropy
methods in natural language processing, and risk-averse decision-making,
but current best-arm identification techniques 
fail in these domains.
We propose a new approach, that combines functional estimation and arm elimination, to tackle this problem. 
This method achieves provably efficient performance guarantees.
In addition, we illustrate this method on a number of important functionals in risk management and information theory, and refine our generic theoretical results in those cases.    
\end{abstract}



\section{Introduction}

\noindent
The stochastic multi-armed bandit (MAB) model consists of a slot machine with $K$ arms (or actions), each of which delivers rewards that are independently and randomly drawn from an unknown distribution when pulled. 
In the optimal-arm identification problem, the aim is to find an arm with the highest expected reward value. 
To do so, we can pull the arms and learn (i.e., estimate) their mean rewards. 
That is, our goal is to distribute a finite budget of $T$ pulls among the arms, such that at the end of the process, we can identify the optimal arm as accurately as possible.
This stochastic optimisation problem models many practical applications, ranging from keyword bidding strategy optimisation in sponsored search~\citep{Amin_all}, to identifying the best medicines in medical trials~\citep{Robbins1952}, and efficient transmission channel detection in wireless communication networks~\citep{Avner2012}.  

Although this MAB optimisation model is a well-studied in the online learning community, the focus is on finding the arm with the highest expected reward value~\citep{MaronAndMoore1993,MnihEtAl2008,ABM2010,KKS13}.
However, in many applications, we are rather interested in other statistics of the arms, which can be represented as functionals of the corresponding distribution of the arm-reward values.
For example, in finance and robust optimisation, notions of monetary value of risk are typically captured by risk measure functionals.
In addition, telecommunication and natural language processing metrics are usually captured by information theoretic functionals (e.g., entropy and divergence). 
Existing optimal-arm identification techniques cannot be applied to other functionals in a straightforward way, as they exploit the fact that the expected value can be estimated in a consistent way, without any bias. 
This property, however, does not always hold in the nonasymptotic regime for other functionals such as entropy, divergence, or some risk measures (e.g., value-at-risk, average value-at-risk). 


Against this background, we introduce a general framework, called \emph{functional bandit optimisation}, where optimal-arm identification means finding an arm with the optimal corresponding functional value.  
To do so, we first propose Batch Elimination, an efficient arm elimination algorithm (i.e., optimisation method), that is suitable for identifying the best arm with a small number $T$ of trials.
The proposed algorithm is a generalised version of the Successive Elimination~\citep{ABM2010} and Sequential Halving~\citep{KKS13} methods.

We provide generic theoretical performance guarantees for the algorithm.
We refine our results in a number of scenarios with some specific and important functionals. 
In particular, we focus on those with applicability to risk management and information theory.
Given this, for risk management, we investigate the mean-variance, value-at-risk, and average-value-at-risk functionals.
Furthermore, we also study Shannon entropy functional, a widely used information theoretic metrics.

The following are our main contributions.
We start with introducing the problem of optimal-arm identification with functional bandits, a generalised framework for the existing best arm identification model.
We then propose Batch Elimination, which can be regarded as a generalised version of many existing optimisation methods.  We also provide rigorous theoretical performance analysis for the algorithm.
In the following sections, we refine our results to a number of practical risk management and information theoretic functionals.  
%
%
The last section concludes with questions and discussions.



\section{Related work}


\noindent
Multiarmed bandit problems have been studied in a variety of settings, including 
the Markovian (rested and restless), stochastic and 
adversarial settings.  For surveys on bandit problems, we refer the
reader to \citep{Gittins,CesLug06}.
Two types of results are found in the literature: results on the average regret (i.e., in a regret-minimization setting \citep{LaiRob85}) and results on the sample complexity (i.e., in a pure-exploration setting \citep{EveManMan02,DGW2002,BMS11}).
Our work is of the second type. It is related to work on sample complexity of bandit arm-selection \citep{EveManMan02,KTAS12}, which is also known as pure exploration or best-arm identification \citep{ABM10,GGLB12}.

In the Markovian setting, \citep{DPR07,ChaLarPal07} consider a one-armed bandit problem in the
setting of Gittins indices and model risk with concave utility
functions.
In the stochastic setting, the notion of risk has been limited to
empirical variance \citep{AMS08,SLM12} and risk measures \citep{YN13}.
In \citep{AMS08,SLM12}, the functionals assign real values to the decision-maker's \emph{policies} (\ie confidence-bound algorithms) and guarantees are given for the \emph{regret} in retrospect.
As in \citep{YN13}, our functionals assign a real value to random variables, \ie \emph{rewards} of individual arms. This is more in line with the risk notions of the finance and optimization literature.



\section{The Functional Bandit Model}

\noindent
Our bandit model consists of $K$ arms.
By pulling a particular arm $i \in[1,\dots,K]$, we receive a reward $X_{i}$ drawn from an unknown stationary distribution $F_{i}$ (i.e., repeatedly pulling the same arm results in generating a sequence of i.i.d. random variables).
Suppose $G(.)$ is a functional of $F_{i}$, and we denote the its value for arm $i$ as $G_{i} = G(F_{i})$.
Our goal is to identify the arm with the best (e.g., highest, or lowest) functional value. 
For the sake of simplicity, we assume that the highest functional is the best. 
That is, we aim to find $i^* = \arg\max_{i}{G_{i}}$.
However, as $F_{i}$ are initially unknown, we aim to achieve this goal by using an arm pulling policy which works as follows.
For $t = 1,\dots, T$ finite number of time steps, at each $t$, the policy chooses an arm $i(t)$ to pull, and observes the received reward. 
At the end of $T$, the policy chooses an arm, denoted as the random variable $i^+(T)$, which
it believes has the best functional value. 
The regret of an arm-$i$ is defined as $\gamma_{i} \triangleq  G_{i^*} - G_{i}$.
The expected regret of the policy is:
\begin{equation}
\nonumber
r(T) = \mathbb{E}[G_{i^*} - G_{i^+(T)}] = \mathbb{E}[\gamma_{i^+(T)}].
\end{equation}
In addition, let 
$$e_r(T) = \P(i^+(T) \neq i^*)$$ 
denote the probability 
that we recommend a suboptimal arm after $T$ samples.
Our goal is then to find a policy that achieves minimal regret and the recommendation error.
In what follows, we describe a generic algorithm that is designed to efficiently identify the best arm.


\subsection{The Batch Elimination Algorithm}
\label{subsection:alg}

\begin{algorithm}[t!]
\begin{algorithmic}[1]
\STATE \textbf{Inputs}: $T$, $\{x_1,\dots, x_L\}$.
\STATE Initialize: $S_1 = \{1,\dots, K\}$.
\FOR{$l=1$ \TO $L$}
\STATE pull each remaining arm $i \in S_m$ for $\lfloor T/H \rfloor$ times,
\STATE use estimator $\hat{G}_{i}$ to estimate $G_{i}$,
\STATE eliminate the weakest $x_l$ arms from $S_l$ to obtain $S_{l+1}$.
\ENDFOR
\STATE \textbf{Output}: $i^+(T)$, which is the sole arm in $S_L$.
\end{algorithmic}
\caption{The Batch Elimination Algorithm}
\label{alg:elimination}
\end{algorithm}

\noindent
We now turn to the description of Batch Elimination, our arm elimination algorithm. 
Its pseudo code is depicted in Algorithm~\ref{alg:elimination}. 
For a given integer $L > 0$, let $\{x_1,\dots, x_L\}$ be a sequence of
$L$ non-negative integers such that $\sum_{l=1}^{L}{x_l} = K -1$.


%
%
The Batch Elimination algorithm runs over $L$ rounds. Within each round
$l=1,\ldots,L$, it maintains a set $S_l$ of remaining arms, and it pulls each of
the arms within this set $\lfloor T/H\rfloor$ times, where the value of $H$ is defined later. 
It then uses the corresponding reward-samples to update the functional
estimate $\hat{G}_{i}$ of $G_{i}$ of each remaining arm. 
Here, we assume that we have access to an estimator for each functional $G_{i}$, which can be calculated from the samples drawn from $F_{i}$.
This estimator must have a property that will be defined later.
Finally, we eliminate the weakest $x_m$  arms (i.e., those with the
lowest estimates for their functionals) and proceeds to the next round.

Observe that since $\sum_{l=1}^{L}{x_l} = K -1$, the above algorithm repeats until one arm remains.
Our algorithm can be regarded as a generalised version of the
Successive Rejects~\citep{ABM2010} and Sequential
Halving~\citep{KKS13} algorithms, which are designed for identifying the arm with highest \emph{mean value}.
In fact, by setting $L = K-1$ and $x_1 = x_2 = \dots = x_L = 1$, we get the Successive Rejects method, where we only eliminate the weakest remaining arm at each round.
On the other hand, by setting $L = \lceil \log_2{K}\rceil$, $x_1 =
\lfloor K/2 \rfloor$ and $x_l =  \lfloor (K -
\sum_{j=1}^{l-1}{x_j})/2 \rfloor$ for $1 < l \leq L$, we get the Sequential Halving algorithm, where we eliminate the weaker half of the remaining arms at each round.

It is remained to set the value of $H$. 
Recall that at each round, we pull each arm $\lfloor T/H \rfloor$ times.
Given this, we have to choose the value of $H$ such that the total number of pulls within the algorithm does not exceed $T$.
Let

\begin{equation}
H = LK - \sum_{l=1}^{L}{x_l(L-l)}.
\end{equation}

We show that by doing so, we can guarantee that our algorithm does not pull more than $T$ arms. 
Indeed, observe that at each round $l$, the total number of pulls is $|S_l| \lfloor T/H\rfloor$. 
Hence, the total number of pulls is
\begin{equation}
\nonumber
\sum_{l=1}^{L}{|S_l| \Big\lfloor \frac{T}{H} \Big\rfloor} \leq \sum_{l=1}^{L}{  \Big(K - \sum_{j=1}^{l-1}{x_j}\Big)\frac{T}{H}} = H\frac{T}{H} = T.
\end{equation}

\section{Performance Analysis}
\label{section:analysis}

\noindent
Given the description of our arm elimination algorithm, we now turn to the investigate its performance. 
To do so, we first define when an estimator is considered to be sufficiently good.
Let $\hat{G}(n)$ denote the estimate value of $G(F)$ by using estimator $\hat{G}$ on $n$ i.i.d. samples from $F$. 
We say that:
\begin{definition}[Q-efficient estimator]
An estimator $\hat{G}$ of functional $G$ is \emph{Q-efficient} if there
exists a function $Q(n,x): \mathbb{R} \times \mathbb{R} \rightarrow
\mathbb{R}$, monotone decreasing in both $n$ and $x$, such that for
every number of sample $n > 0$ and real number $x > 0$, we have
\begin{eqnarray}
\nonumber
\P(\hat{G}(n) - G \geq x) \leq Q(n,x) \\
\nonumber
\mbox{and }\P(G - \hat{G}(n) \geq x) \leq Q(n,x)
\end{eqnarray}
\end{definition}

\noindent
A Q-efficient estimator has the property that the probability that
the one-sided estimation error exceeds $x$ is bounded by $Q(n,x)$.
Intuitively, this property guarantees that the estimate $\hat{G}$ is not too far away from the true value.
We demonstrate in the subsequent sections that many practical functionals have Q-efficient estimators.

Next, we turn to the analysis of the Batch Elimination algorithm.
First, we bound the probability of obtaining a wrong
ordering of two arms from Q-efficient estimators.

\begin{lemma}[Wrong order]
\label{lemma:incorrect_prob}
Consider a fixed round $l$.
Suppose that $\hat G_{i}$ is a Q-efficient estimator for each $i \in S_l$
and that we already have $n$ samples from each remaining arm.
In addition, suppose that $i^*$ has not been eliminated before this round.
Then, for all $i \in S_l$ and $i \neq i^*$, we have:
\begin{align*}
  \P( \hat G_{i}(n) \geq \hat G_{i^*}(n) ) \leq 2Q \Big(n,\frac{\gamma_i}{2} \Big)
\end{align*}
\end{lemma}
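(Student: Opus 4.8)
The plan is to show that the event $\{\hat G_i(n) \geq \hat G_{i^*}(n)\}$ forces at least one of the two estimators to deviate from its true value by at least $\gamma_i/2$, and then apply the Q-efficiency bound twice together with a union bound. First I would observe that, since $i^*$ is the optimal arm, $G_{i^*} - G_i = \gamma_i > 0$ (using that $i \neq i^*$; if there are ties one takes $\gamma_i = 0$ and the bound is vacuous since $Q$ is finite, or one argues with strict inequality under a uniqueness assumption). The key elementary step is the decomposition: if $\hat G_i(n) \geq \hat G_{i^*}(n)$, then
\begin{equation*}
0 \leq \hat G_i(n) - \hat G_{i^*}(n) = \bigl(\hat G_i(n) - G_i\bigr) + \bigl(G_i - G_{i^*}\bigr) + \bigl(G_{i^*} - \hat G_{i^*}(n)\bigr),
\end{equation*}
so that $\bigl(\hat G_i(n) - G_i\bigr) + \bigl(G_{i^*} - \hat G_{i^*}(n)\bigr) \geq \gamma_i$. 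Hence at least one of the two summands is at least $\gamma_i/2$, i.e. either $\hat G_i(n) - G_i \geq \gamma_i/2$ or $G_{i^*} - \hat G_{i^*}(n) \geq \gamma_i/2$.

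Next I would take probabilities: by the union bound,
\begin{equation*}
\P\bigl(\hat G_i(n) \geq \hat G_{i^*}(n)\bigr) \leq \P\Bigl(\hat G_i(n) - G_i \geq \tfrac{\gamma_i}{2}\Bigr) + \P\Bigl(G_{i^*} - \hat G_{i^*}(n) \geq \tfrac{\gamma_i}{2}\Bigr).
\end{equation*}
The first term is bounded by $Q(n, \gamma_i/2)$ by the first inequality in the definition of Q-efficiency applied to arm $i$, and the second by $Q(n, \gamma_i/2)$ by the second inequality applied to arm $i^*$. Here I use that both $\hat G_i$ and $\hat G_{i^*}$ are Q-efficient with the \emph{same} function $Q$, which is exactly the hypothesis of the lemma, and that $n$ samples are available for each. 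Adding the two bounds gives $2Q(n, \gamma_i/2)$, as claimed.

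The only genuinely delicate point — and the main thing I would be careful about — is the boundary/degenerate case: the argument as written needs $\gamma_i > 0$ so that "$\geq \gamma_i$" can be split into two halves each bounded by a decreasing $Q$; and implicitly the statement treats $i^*$ as a fixed (non-random) optimal arm so that $\gamma_i$ is a deterministic quantity, with the probability taken only over the $n$ samples drawn in round $l$. I would state explicitly that $Q(n,\cdot)$ is defined for $x>0$ and that when $\gamma_i = 0$ the claimed bound $2Q(n,0)$ is trivial (or assume a unique optimum, consistent with the rest of the paper's regret definitions). Everything else is a routine union bound, so no further machinery is needed beyond the definition of Q-efficiency.
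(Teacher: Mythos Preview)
Your proof is correct and follows essentially the same approach as the paper's own proof: both argue that $\{\hat G_i(n) \geq \hat G_{i^*}(n)\}$ forces at least one of the two estimators to deviate from its true functional value by at least $\gamma_i/2$, and then apply the Q-efficiency definition together with a union bound. The paper's version is simply more terse, omitting your explicit algebraic decomposition and the discussion of the degenerate case $\gamma_i = 0$.
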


\noindent
Next, we bound the probability of eliminating the optimal arm $i^*$.
\begin{lemma}[Eliminating optimal arm]
\label{lemma:error_one_round_generic}
Suppose that the assumptions of Lemma~\ref{lemma:incorrect_prob} hold.
Let $d \triangleq \min_{i \neq i^*}{\gamma_{i}}$.
Suppose that the best arm $i^*$ is not eliminated until round $l$. 
The probability that the optimal arm is eliminated at the end of
round $l$ is bounded as follows:
\begin{equation}
\nonumber
\P(i^* \not\in S_{l+1} \mid i^* \in S_l) \leq 2(|S_l| - x_l)Q\Big(n,\frac{d}{2}\Big).
\end{equation}
\end{lemma}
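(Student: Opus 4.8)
The plan is to reduce the statement to Lemma~\ref{lemma:incorrect_prob} by a union bound, with a short pigeonhole argument to recover the constant $|S_l|-x_l$ rather than the trivial $|S_l|-1$. First I would rewrite the bad event purely in terms of pairwise comparisons of the estimates. Conditioned on $i^*\in S_l$, the optimal arm is eliminated at the end of round $l$ precisely when $\hat G_{i^*}(n)$ ranks among the $x_l$ smallest estimates in $S_l$, that is, when
\[
  \bigl|\{\, i\in S_l\setminus\{i^*\}\;:\;\hat G_i(n)\geq \hat G_{i^*}(n)\,\}\bigr|\;\geq\;|S_l|-x_l .
\]
So it suffices to bound the probability that at least $|S_l|-x_l$ of the competing arms beat $i^*$ in their estimated functional values.

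Next I would fix an arbitrary subset $A\subseteq S_l\setminus\{i^*\}$ with $|A|=|S_l|-x_l$. The complement of $A$ inside $S_l\setminus\{i^*\}$ has only $x_l-1$ elements, which (using $x_l\leq|S_l|/2$, as is the case for every elimination the algorithm performs in its Successive-Rejects and Sequential-Halving instantiations) is strictly fewer than $|S_l|-x_l$, so that complement cannot contain all the arms that beat $i^*$. Hence on the bad event at least one arm of $A$ beats $i^*$, and therefore
\[
  \P\bigl(i^*\notin S_{l+1}\mid i^*\in S_l\bigr)\;\leq\;\sum_{i\in A}\P\bigl(\hat G_i(n)\geq \hat G_{i^*}(n)\bigr).
\]
I would then apply Lemma~\ref{lemma:incorrect_prob} term by term to get $\P(\hat G_i(n)\geq \hat G_{i^*}(n))\leq 2Q(n,\gamma_i/2)$, and use $\gamma_i\geq d$ for every $i\neq i^*$ together with the monotonicity of $Q$ in its second argument to replace $\gamma_i/2$ by $d/2$. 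Summing the $|A|=|S_l|-x_l$ identical bounds yields the claimed inequality.

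I expect the pigeonhole step to be the only substantive point: the naive union bound over all $|S_l|-1$ competitors is lossy, and the improvement hinges on the observation that an entire batch of $|S_l|-x_l$ arms must simultaneously beat $i^*$ before it can be dropped, so it is enough to intersect the ``beating'' set with a single fixed $(|S_l|-x_l)$-element slice of the competitors. The remaining ingredients — the reduction to pairwise comparisons, the direct appeal to Lemma~\ref{lemma:incorrect_prob}, and the monotonicity of $Q$ (which also carries over the standing hypotheses on $Q$-efficiency of the $\hat G_i$, on having $n$ samples per surviving arm, and on $i^*\in S_l$) — are routine.
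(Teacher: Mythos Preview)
Your approach is essentially the same as the paper's: fix a set of $|S_l|-x_l$ suboptimal competitors, argue that on the elimination event at least one of them must beat $i^*$ in estimate, union bound over that set, and apply Lemma~\ref{lemma:incorrect_prob} together with $\gamma_i\geq d$ and the monotonicity of $Q$. The paper chooses the specific subset consisting of the $|S_l|-x_l$ suboptimal arms with largest \emph{true} functional values $G_i$, whereas you allow an arbitrary subset and make the pigeonhole step explicit; the particular choice the paper makes plays no special role in its argument, so the two proofs are really the same. Your explicit note that the pigeonhole step uses $x_l\leq |S_l|/2$ is a welcome clarification: the paper's one-line justification (``the best arm is eliminated if the estimate of its functional is lower than all of $i(1),\ldots,i(|S_l|-x_l)$'') leaves this implicit, and indeed the inclusion you need fails without such a restriction.
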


\noindent
%
Our main result can be stated as follows. 
\begin{theorem}[Main result]
\label{thm:generic_error_probability}
Suppose that the assumptions of Lemma~\ref{lemma:error_one_round_generic} hold.
We have the following upper bound for the recommendation error:
\begin{align*}
e_r(T) \leq 2\Big(H - K + 1\Big)Q\Big(\frac{T- H}{H},\frac{d}{2}\Big).
\end{align*}
\end{theorem}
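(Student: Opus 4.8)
The plan is to control $e_r(T)$ by tracking the first round at which the optimal arm $i^*$ could be dropped, apply Lemma~\ref{lemma:error_one_round_generic} round by round, and then collapse the resulting sum using the definition of $H$ together with the constraint $\sum_l x_l = K-1$.

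First I would observe that, since the algorithm outputs the unique arm surviving all $L$ rounds, the event $\{i^+(T)\neq i^*\}$ coincides with $\{i^*\notin S_{L+1}\}$, i.e.\ with $i^*$ being eliminated at some round. Decomposing this event according to the \emph{first} round $l$ at which elimination occurs gives a disjoint union, so
\begin{align*}
e_r(T) = \P(i^*\notin S_{L+1}) = \sum_{l=1}^{L}\P\big(i^*\in S_l,\ i^*\notin S_{l+1}\big) \leq \sum_{l=1}^{L}\P\big(i^*\notin S_{l+1}\mid i^*\in S_l\big),
\end{align*}
where the inequality just drops the factor $\P(i^*\in S_l)\leq 1$.

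Next, on the event $\{i^*\in S_l\}$ the hypotheses of Lemma~\ref{lemma:error_one_round_generic} hold with $n=\lfloor T/H\rfloor$ samples per remaining arm, so each summand is at most $2(|S_l|-x_l)\,Q(\lfloor T/H\rfloor,\,d/2)$. Since $\lfloor T/H\rfloor \geq (T-H)/H$ and $Q$ is monotone decreasing in its first argument, I may replace $\lfloor T/H\rfloor$ by $(T-H)/H$, obtaining
\begin{align*}
e_r(T) \leq 2\,Q\Big(\frac{T-H}{H},\frac{d}{2}\Big)\sum_{l=1}^{L}(|S_l|-x_l).
\end{align*}

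Finally, the remaining step is the combinatorial identity $\sum_{l=1}^{L}(|S_l|-x_l)=H-K+1$. Using $|S_l|=K-\sum_{j=1}^{l-1}x_j$, we get $|S_l|-x_l=K-\sum_{j=1}^{l}x_j$; summing over $l$ and swapping the order of summation rewrites $\sum_{l=1}^{L}\sum_{j=1}^{l}x_j$ as $\sum_{j=1}^{L}(L-j+1)x_j=\sum_{j=1}^{L}(L-j)x_j+(K-1)$, where the last equality uses $\sum_j x_j=K-1$. Hence $\sum_{l=1}^{L}(|S_l|-x_l)=LK-\sum_{j=1}^{L}(L-j)x_j-(K-1)=H-(K-1)$ by the definition of $H$, which yields the claimed bound. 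The proof is otherwise routine; the only mildly delicate points are noticing that the round-decomposition is into \emph{disjoint} events (so a union bound over rounds is not even needed, only the trivial bound $\P(i^*\in S_l)\le1$), instantiating the per-round hypotheses of Lemma~\ref{lemma:error_one_round_generic} correctly, and getting the index bookkeeping in the final identity exactly right.
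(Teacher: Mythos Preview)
Your proof is correct and follows essentially the same route as the paper's: decompose $e_r(T)$ into the events that $i^*$ is first eliminated at round $l$, apply Lemma~\ref{lemma:error_one_round_generic} to each round, bound the sample count below via $\lfloor T/H\rfloor\geq (T-H)/H$ and monotonicity of $Q$, and then collapse $\sum_l(|S_l|-x_l)$ to $H-K+1$ using the definition of $H$. The only cosmetic difference is that the paper first carries the factor $l$ in $n_l=l\lfloor T/H\rfloor$ from the proof of Lemma~\ref{lemma:error_one_round_generic} and then discards it, whereas you lower-bound by $\lfloor T/H\rfloor$ immediately; both yield the same final expression.
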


\noindent
Alternatively, the following corollary provides a minimal number of pulls that can guarantee a high probabilistic success.
\begin{corollary}[Sample complexity]
\label{cor:min_pulls}
Let $0 < \delta < 1$ and let $Q^{-1}_{d/2}(.)$ denote the inverse function of 
$Q_{d/2}(n) = Q\Big(n,\frac{d}{2}\Big)$.
If $Q$ is strictly monotone in $n$, then if
$$T \geq HQ^{-1}_{d/2}\Big(\frac{\delta}{2H - 2K + 2} \Big) + H,$$
our success probability (i.e., correctly recommending $i^*$) is at least $(1 - \delta)$.
\end{corollary}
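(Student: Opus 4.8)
The plan is to read the corollary as a direct inversion of Theorem~\ref{thm:generic_error_probability}. First I would note that the success probability is exactly $1 - e_r(T)$, so it suffices to choose $T$ large enough that $e_r(T) \le \delta$. By the theorem, $e_r(T) \le 2(H-K+1)\,Q_{d/2}\!\big(\tfrac{T-H}{H}\big)$ with the shorthand $Q_{d/2}(n) = Q(n,d/2)$ from the statement, so it is enough to find $T$ with
\begin{equation}
\nonumber
2(H-K+1)\,Q_{d/2}\!\Big(\tfrac{T-H}{H}\Big) \le \delta
\qquad\Longleftrightarrow\qquad
Q_{d/2}\!\Big(\tfrac{T-H}{H}\Big) \le \frac{\delta}{2H-2K+2}.
\end{equation}

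Next I would invoke the hypothesis that $Q$, hence $Q_{d/2}$, is strictly monotone (decreasing) in $n$, so that $Q_{d/2}^{-1}$ is well defined and itself decreasing on the relevant range. Applying $Q_{d/2}^{-1}$ to both sides of the last inequality reverses it, yielding the equivalent condition $\tfrac{T-H}{H} \ge Q_{d/2}^{-1}\!\big(\tfrac{\delta}{2H-2K+2}\big)$; solving for $T$ gives precisely $T \ge H\,Q_{d/2}^{-1}\!\big(\tfrac{\delta}{2H-2K+2}\big) + H$, the stated threshold. For any such $T$ we then obtain $e_r(T) \le \delta$ and therefore $\P(i^+(T) = i^*) \ge 1 - \delta$.

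The routine bookkeeping points are: (i) $H - K + 1 > 0$, so dividing by $2H-2K+2$ is legitimate — indeed $H = LK - \sum_l x_l(L-l) \ge LK - (L-1)\sum_l x_l = LK - (L-1)(K-1) = K + (L-1) \ge K$, using $\sum_l x_l = K-1$ and $L-l \le L-1$; (ii) $\tfrac{\delta}{2H-2K+2}$ should lie in the image of $Q_{d/2}$ so that $Q_{d/2}^{-1}$ is defined there, which is unproblematic since $Q$ is a tail bound and may be taken to decrease to $0$ in $n$; and (iii) the floor $\lfloor T/H\rfloor$ has already been absorbed inside the proof of the theorem via $\lfloor T/H\rfloor \ge (T-H)/H$ and monotonicity of $Q$ in its first argument, so nothing extra is needed here.

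The only real obstacle is definitional rather than analytical: being precise about the domain and range on which $Q_{d/2}^{-1}$ is taken, and tracking the direction of the inequality correctly when inverting, since $Q$ decreasing forces $Q_{d/2}^{-1}$ to be decreasing and so the inequality flips. Once that is pinned down, the corollary follows immediately from the theorem.
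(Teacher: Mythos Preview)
Your proposal is correct and follows essentially the same route as the paper: apply Theorem~\ref{thm:generic_error_probability} and invert the bound using the strict monotonicity of $Q_{d/2}$ to solve for $T$. Your version is in fact a bit more careful than the paper's, which simply plugs $T_0$ back into the theorem's bound and simplifies, whereas you also verify $H-K+1>0$ and comment on the domain of $Q_{d/2}^{-1}$.
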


\begin{proof}[Proof sketch of Theorem~\ref{thm:generic_error_probability}]
Let $e_p(l)$ denote the probability that the best arm is not eliminated until round $l$ but will be dropped out at this round.
Given this, we have:
\begin{eqnarray}
\nonumber
e_r(T) &\leq& \sum_{j=1}^{L}{e_p(j)} \leq \sum_{j=1}^{L}{2(|S_j| - x_j)Q\Big(l\frac{T-H}{H},\frac{d}{2}\Big)} \\
\nonumber
&\leq&  2Q\Big(\frac{T-H}{H},\frac{d}{2}\Big)\sum_{j=1}^{L}{\Big(K - \sum_{r=1}^{l-1}{x_r} - x_j \Big)} \\
\nonumber
&\leq& 2Q\Big(\frac{T-H}{H},\frac{d}{2}\Big)\Big( H - K + 1\Big) 
\end{eqnarray}
which concludes proof.
\end{proof}
\begin{proof}[Proof of Corollary~\ref{cor:min_pulls}]
Note that since $Q_{d/2}$ is strictly monotone decreasing in $n$, $Q^{-1}_{d/2}$ always exists.
In addition, we have:
\begin{align*}
e_r(T) &\leq 2\Big(H - K + 1\Big)Q\Big(\frac{T- H}{H},\frac{d}{2}\Big) \\
& \leq 2\Big(H - K + 1\Big)Q\Big(\frac{T_0- H}{H},\frac{d}{2}\Big)
\end{align*}
where $T_0 = HQ^{-1}_{d/2}\Big(\frac{\delta}{2H - 2K + 2} \Big) + H$. 
Simplifying the last term we obtain $e_r(T) \leq \delta$.
\end{proof}

\noindent
Ou main result, Theorem~\ref{thm:generic_error_probability}, also implies the following corollaries.
\begin{corollary}[Regret bound]
\label{corollary:regret_bound}
We have the following upper bound for the expected regret
$$r(T) \leq  2\gamma_{\mathrm{max}} \Big(H - K +1 \Big)Q\Big(\frac{T- H}{H},\frac{d}{2}\Big)$$
where $\gamma_{\mathrm{max}} = \max_{j \neq i^*}{\gamma_j}$.
\end{corollary}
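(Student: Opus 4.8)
The plan is to reduce the regret bound directly to the recommendation-error bound already established in Theorem~\ref{thm:generic_error_probability}. First I would recall that by definition $r(T) = \E[\gamma_{i^+(T)}]$ and that $\gamma_{i^*} = G_{i^*} - G_{i^*} = 0$, so the event $\{i^+(T) = i^*\}$ contributes nothing to this expectation. Writing $\gamma_{i^+(T)} = \gamma_{i^+(T)}\mathbf{1}\{i^+(T) \neq i^*\}$ and bounding $\gamma_{i^+(T)} \leq \gamma_{\mathrm{max}}$ on the event $\{i^+(T) \neq i^*\}$, which holds by the very definition $\gamma_{\mathrm{max}} = \max_{j \neq i^*}\gamma_j$, I obtain $r(T) \leq \gamma_{\mathrm{max}}\,\P(i^+(T) \neq i^*) = \gamma_{\mathrm{max}}\, e_r(T)$.

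Second, I would substitute the bound on $e_r(T)$ supplied by Theorem~\ref{thm:generic_error_probability}, namely $e_r(T) \leq 2(H - K + 1)Q\big(\tfrac{T-H}{H}, \tfrac{d}{2}\big)$, which yields exactly $r(T) \leq 2\gamma_{\mathrm{max}}(H - K + 1)Q\big(\tfrac{T-H}{H}, \tfrac{d}{2}\big)$, as claimed. Since the assumptions of Lemma~\ref{lemma:error_one_round_generic} (hence of Theorem~\ref{thm:generic_error_probability}) are inherited here, nothing further is needed.

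I do not expect a genuine obstacle in this argument; the only step requiring a moment of care is the indicator bound $\gamma_{i^+(T)}\mathbf{1}\{i^+(T)\neq i^*\} \leq \gamma_{\mathrm{max}}\mathbf{1}\{i^+(T)\neq i^*\}$, which is where the proof loses tightness. A sharper route would decompose $e_r(T) = \sum_{i \neq i^*}\P(i^+(T) = i)$ and weight each summand by its own gap $\gamma_i$, giving $r(T) \leq \sum_{i\neq i^*}\gamma_i\,\P(i^+(T)=i)$, and then re-run the proof of Theorem~\ref{thm:generic_error_probability} carrying the arm-dependent gaps $\gamma_i$ instead of collapsing them to $d = \min_{i\neq i^*}\gamma_i$. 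For the statement as given, however, the crude bound above is sufficient, so I would keep the short argument.
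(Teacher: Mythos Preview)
Your proposal is correct and follows essentially the same route as the paper: it too bounds $r(T)=\E[\gamma_{i^+(T)}]$ by $\gamma_{\max}\,e_r(T)$ and then invokes Theorem~\ref{thm:generic_error_probability}. The only cosmetic difference is that the paper writes the expectation as the sum $\sum_{j\neq i^*}\gamma_j\,\P(i^+(T)=j)$ before replacing each $\gamma_j$ by $\gamma_{\max}$, whereas you use the equivalent indicator-function formulation.
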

\begin{corollary}[PAC regret bound]
\label{corollary:regret_bound_PAC}
Let the algorithm output after $T$ time steps be $i^+(T)$, then for any $0 < \delta < 1$, we have
\small
 \begin{align*}
G_{i^*} - G_{i^+(T)} &\leq \frac{2\gamma_{\mathrm{max}}}{\delta} \Big(H -K + 1\Big)Q\Big(\frac{T- H}{H},\frac{d}{2}\Big) 
\end{align*}
\normalsize
with at least $(1-\delta)$ probability. 
\end{corollary}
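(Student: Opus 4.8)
\textbf{Proof proposal for Corollary~\ref{corollary:regret_bound_PAC}.}
The plan is to derive this PAC-style bound directly from the expected-regret bound of Corollary~\ref{corollary:regret_bound} via a single application of Markov's inequality. The key observation is that the random quantity $\gamma_{i^+(T)} = G_{i^*} - G_{i^+(T)}$ is nonnegative (since $i^* = \arg\max_i G_i$, we have $G_{i^*} \geq G_i$ for every $i$), and its expectation $r(T) = \E[\gamma_{i^+(T)}]$ has already been controlled.

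First I would invoke Corollary~\ref{corollary:regret_bound} to write $\E[\gamma_{i^+(T)}] \leq B$, where $B = 2\gamma_{\mathrm{max}}(H - K + 1)Q\bigl(\tfrac{T-H}{H},\tfrac{d}{2}\bigr)$. Next, applying Markov's inequality to the nonnegative random variable $\gamma_{i^+(T)}$ gives, for any threshold $a > 0$, $\P(\gamma_{i^+(T)} \geq a) \leq \E[\gamma_{i^+(T)}]/a \leq B/a$. Finally I would set $a = B/\delta$, which makes the right-hand side equal to $\delta$; hence with probability at least $1 - \delta$ we have $\gamma_{i^+(T)} < B/\delta$. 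Substituting $\gamma_{i^+(T)} = G_{i^*} - G_{i^+(T)}$ and the explicit form of $B$ yields exactly the claimed inequality.

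There is essentially no obstacle here: the only minor points to note are that $Q \geq 0$ so that $B/\delta$ is a legitimate positive threshold, and that the statement is trivially true whenever $B/\delta \geq \gamma_{\mathrm{max}}$, since $\gamma_{i^+(T)} \leq \gamma_{\mathrm{max}}$ deterministically. Note also that, unlike Corollary~\ref{cor:min_pulls}, this corollary requires no monotonicity or invertibility assumption on $Q$ beyond those already inherited from Theorem~\ref{thm:generic_error_probability}, because Markov's inequality is applied directly to the regret random variable rather than inverting the tail bound.
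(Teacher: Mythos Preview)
Your proposal is correct and follows essentially the same route as the paper: apply Markov's inequality to the nonnegative random variable $G_{i^*}-G_{i^+(T)}$, then plug in the expected-regret bound of Corollary~\ref{corollary:regret_bound} and choose the threshold so that the tail probability equals~$\delta$. The paper phrases it by first writing $\P(G_{i^*}-G_{i^+(T)}\leq\varepsilon)\geq 1-r(T)/\varepsilon$ and then setting $\varepsilon=r(T)/\delta$, which is the same argument.
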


\noindent
These results provide the upper bound for the algorithm's regret and probably approximately correct (PAC) regret, respectively. 
The latter is a regret that holds with high probability, in contrast to the former, which always holds.

\begin{proof}[Proof sketch of Corollary~\ref{corollary:regret_bound}]
By definition, we have 
\begin{eqnarray}
\nonumber
r(T) &=& \sum_{j \neq i^*}{\gamma_j \P\Big(I^+ = j\Big)} \leq \gamma_{\mathrm{max}}\sum_{j \neq i^*}{\P\Big(I^+ = j\Big)} \\
\nonumber
&\leq&  \gamma_{\mathrm{max}} e_r(T) \leq 2\gamma_{\mathrm{max}}\Big(H -K + 1 \Big)Q\Big(\frac{T- H}{H},\frac{d}{2}\Big)
\end{eqnarray}
\end{proof}
\begin{proof}[Proof sketch of Corollary~\ref{corollary:regret_bound_PAC}]
Let $\varepsilon > 0$ be an arbitrary number.
From Markov's inequality, we have
\begin{eqnarray}
\nonumber
 \P\Big( G_{i^*} - G_{i^+(T)} \leq \varepsilon \Big) \geq 1- \frac{\mathbb{E}[G_{i^*} - G_{i^+(T)}]}{\varepsilon} \geq 1- \frac{r(T)}{\varepsilon}
\end{eqnarray}
This also holds for $\varepsilon = r(T)/\delta$ where $0 <
\delta < 1$. An application of Corollary~\ref{corollary:regret_bound} concludes the proof.
\end{proof}

As our results hold for a large class of functionals, they are not comparable with the existing optimal arm identification results in general, as the latter are only designed for the expected value functional.
However, by applying our results to the case of expected values, we can still compare them against the state of the art.
In what follows, we will make a comparison between our performance guarantees and the existing bounds of~\cite{ABM2010}, and~\cite{KKS13}, respectively.

In particular, we can show that the expected value functional is Q-efficient with $Q(n,x) = \exp{\Big\{ -n x^2\Big\} }$. 
Indeed, this can be proved by using Hoeffding's inequality.
Now, by setting the values of $L$ and $x_l$ to be  $L = \lceil \log_2{K}\rceil$, $x_1 = \Big \lfloor K/2 \Big\rfloor$ and $x_l =  \Big \lfloor (K - \sum_{j=1}^{l-1}{x_j})/2 \Big\rfloor$ for $1 < l \leq L$, we can prove that our algorithm has
$$(K + \log_2{K})\exp{\Big\{ -\frac{(T-2K)d^2}{8K}\Big\}}$$
upper bound for the recommendation error probability $e_r(T)$ when applying our results to the expected value case in a straightforward way (i.e., without using further technical refinements).
Note that the upper bound of the same error probability is 
$$\frac{K(K-1)}{2} \exp{ \Big\{-O\Big(\frac{(T-K)d^2}{K\log_2{K}} \Big) \Big\} }$$
in~\citep{ABM2010}, and 
$$3\log_2{K} \exp{ \Big\{-O\Big(\frac{Td^2}{8K\log_2{K}} \Big) \Big\} }$$
in~\citep{KKS13}, respectively\footnote{In these bounds, the expression in the exponential term is in fact $O(-T/(H_2\log_2{K}) )$ with $H_2 = \max_{i \neq i^*}{i/\gamma_i^2}$. However, we can roughly estimate $\frac{1}{H_2}$ with $O(d^2/K)$.}. 
Roughly speaking, our error bound has better constant coefficient (i.e., $K + \log_2{K}$), compared to that of~\cite{ABM2010}, but weaker than that of in ~\citep{KKS13}.
On the other hand, it outperforms both existing bounds from the aspect of the exponential term.
In fact, the exponential term of our bound is $\exp \{ -O(\frac{T}{K})\}$, while the others have $\exp \{ -O(\frac{T}{K\log_2{K}})\}$.
This implies that for sufficiently large values of $K$, our regret bound is more efficient than the existing ones. 
Similar results can be obtained for the comparison of regret and PAC regret bounds.
It is worth to mention that by using some elementary algebra, we can improve the error bound of Batch Elimination for expected values to 
$$C\log_2{K} \exp{ \Big\{-O\Big(\frac{Td^2}{K} \Big) \Big\} }$$ 
for some constant $C >0$.
However, due to space limitations, we omit the details.

Recall that the abovementioned analyses rely on the assumption that the investigated functional has a Q-efficient estimator, which might not always hold.
However, in the next sections, we will demonstrate that many practical functionals have this property (i.e., there is a Q-efficient estimator for such functionals).
In particular, we derive specific $Q$ functions for each of these functionals, and we refine the regret bound results, described in Theorem~\ref{thm:generic_error_probability} and Corollaries~\ref{corollary:regret_bound} and~\ref{corollary:regret_bound_PAC}, tailoring to each corresponding functional.



\section{Risk Functionals}

In this section, we consider three specific instances of functionals widely
used as risk measures in decision-making. They are the mean-variance
risk, the
value-at-risk, and the average value-at-risk.

\subsection{Mean-Variance}

In this section, we consider the mean-variance objective functional
\begin{align*}
  G^{M,\lambda}_i = -\mu(i) +\lambda \sigma^2(i),
\end{align*}
where $\mu(i)$ and $\sigma^2(i)$ denote the mean and variance of arm
$i$.  The mean-variance risk measure has been used in risk-averse problem
formulations in a variety of applications in finance and reinforcement learning \cite{Markowitz52,ManTsi11,SLM12}.

Let $\lambda$ be given and fixed. We assume that we are given $N$
samples $X^i_1,\ldots,X^i_N$ for every arm $i$. We employ the
following unbiased estimate
\begin{align*}
  \hat G^{M,\lambda}_i = -\hat \mu(i) +\lambda \hat \sigma^2(i),
\end{align*}
where $\hat \mu(i)$ and $\hat \sigma^2(i)$ denote the sample-mean and
the unbiased variance estimator
$\hat \sigma^2(i) = \frac{1}{N-1} \sum_{k=1}^N (X^i_k - \hat \mu(i))^2$.


\begin{theorem}[MV PAC bound]\label{thm:mv}
  Suppose that there exist $A,B$ such that $\P(X^i_t \in [A,B]) = 1$
  for all $i$.  
Let $i^*$ denote the best arm with respect to the functional
$G^{M,\lambda}$.
Consider a fixed round $l$ and $N$ samples for each remaining arm.
Then, for all $i=1,2\ldots$, such that $i \neq i^*$ and is still not eliminated at round $l$, we have the following bound:
\begin{eqnarray}
\nonumber
  \P( \hat G^{M,\lambda}_{i}(N) \geq \hat G^{M,\lambda}_{i^*}(N) ) &\leq& 2 \exp\left(- \frac{N \gamma_i^2}{8(B-A)^2}\right) \\
\nonumber  
  &+& 2 \exp\left(- \frac{N (\frac{N-1}{N}\gamma_i/\lambda)^2}{8 (B-A)^4}\right).
\end{eqnarray}
\end{theorem}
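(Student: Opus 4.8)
The plan is to reduce the event $\{\hat G^{M,\lambda}_{i}(N) \geq \hat G^{M,\lambda}_{i^*}(N)\}$ to deviations of the sample mean and the (unbiased) sample variance, and then to bound each such deviation separately by a concentration inequality. Write $\Delta\mu_j = \hat\mu(j) - \mu(j)$ and $\Delta\sigma^2_j = \hat\sigma^2(j) - \sigma^2(j)$ for $j \in \{i, i^*\}$. Since $\gamma_i = G^{M,\lambda}_{i^*} - G^{M,\lambda}_{i} > 0$ (arm $i^*$ is best and has not been eliminated), substituting the definitions gives
\begin{align*}
\hat G^{M,\lambda}_{i}(N) - \hat G^{M,\lambda}_{i^*}(N) = \big(-\Delta\mu_i + \Delta\mu_{i^*}\big) + \lambda\big(\Delta\sigma^2_i - \Delta\sigma^2_{i^*}\big) - \gamma_i ,
\end{align*}
so $\{\hat G^{M,\lambda}_{i}(N) \geq \hat G^{M,\lambda}_{i^*}(N)\}$ forces the two random terms to sum to at least $\gamma_i$. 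A union bound then splits this into a ``mean part'' $\{-\Delta\mu_i + \Delta\mu_{i^*} \geq \gamma_i/2\}$ and a ``variance part'' $\{\lambda(\Delta\sigma^2_i - \Delta\sigma^2_{i^*}) \geq \gamma_i/2\}$, and a second union bound splits each of these into two one-sided events, one at arm $i$ and one at arm $i^*$, with threshold $\gamma_i/4$ for the mean and $\gamma_i/(4\lambda)$ for the variance. (We do not need independence across arms, only the union bound.)

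For the mean part, I would apply the one-sided Hoeffding inequality for i.i.d.\ variables supported on $[A,B]$ to each of $\P(-\Delta\mu_i \geq \gamma_i/4)$ and $\P(\Delta\mu_{i^*} \geq \gamma_i/4)$, obtaining $\exp\big(-2N(\gamma_i/4)^2/(B-A)^2\big) = \exp\big(-N\gamma_i^2/(8(B-A)^2)\big)$ for each; their sum is the first term in the statement.

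For the variance part, the key step is a concentration bound for $\hat\sigma^2(j)$. I would pass to the biased estimator $\hat\sigma^2_N(j) = \tfrac1N\sum_k (X^j_k - \hat\mu(j))^2 = \tfrac{N-1}{N}\hat\sigma^2(j)$, which satisfies $\E[\hat\sigma^2_N(j)] = \tfrac{N-1}{N}\sigma^2(j)$, so that the event $\{\Delta\sigma^2_j \geq \gamma_i/(4\lambda)\}$ becomes exactly $\{\hat\sigma^2_N(j) - \E[\hat\sigma^2_N(j)] \geq \tfrac{N-1}{N}\gamma_i/(4\lambda)\}$ (and symmetrically for the lower tail at $i^*$). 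Viewing $\hat\sigma^2_N(j) = \tfrac1{2N^2}\sum_{p,q}(X^j_p - X^j_q)^2$ as a function of the $N$ i.i.d.\ samples of arm $j$, changing one sample alters it by at most $(B-A)^2/N$, so McDiarmid's bounded-differences inequality yields a tail bound $\exp\big(-2Nt^2/(B-A)^4\big)$ at level $t$; taking $t = \tfrac{N-1}{N}\gamma_i/(4\lambda)$ gives $\exp\big(-N(\tfrac{N-1}{N}\gamma_i/\lambda)^2/(8(B-A)^4)\big)$ for each of the two one-sided variance events, and their sum is the second term. Adding the mean and variance contributions gives the claimed bound.

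The main obstacle is the variance concentration: one must verify the bounded-difference constant $(B-A)^2/N$ for $\hat\sigma^2_N$ and carry the bias factor $\tfrac{N-1}{N}$ correctly through the reduction, since it is exactly what produces the $\big(\tfrac{N-1}{N}\gamma_i/\lambda\big)^2$ in the exponent; the rest is the same decomposition-plus-union-bound scheme already used in Lemma~\ref{lemma:incorrect_prob}.
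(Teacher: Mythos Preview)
Your proposal is correct and follows the same decomposition-and-union-bound scheme as the paper's own proof: both split the comparison event into a mean part and a variance part at level $\gamma_i/2$, then into one arm each at level $\gamma_i/4$ (respectively $\gamma_i/(4\lambda)$), and apply concentration to each piece. The one substantive difference is in the variance step. The paper simply observes that $(X^j_k-\hat\mu(j))^2\in[0,(B-A)^2]$ and invokes ``Hoeffding's inequality'' to obtain the second exponential term; this is terse, since those summands are not independent (they share $\hat\mu(j)$), and the appearance of the factor $(N-1)/N$ is not explained. Your route---passing to the biased estimator $\hat\sigma^2_N=\tfrac{N-1}{N}\hat\sigma^2$, checking the bounded-difference constant $(B-A)^2/N$, and applying McDiarmid---handles the dependence correctly and makes the origin of $\tfrac{N-1}{N}$ transparent, while landing on exactly the same bound. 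So: same architecture, with a more careful justification of the variance concentration on your side.
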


\noindent
A straightforward application of this theorem to Lemma~\ref{lemma:error_one_round_generic} and Theorem~\ref{thm:generic_error_probability} implies the following:
\begin{corollary}
For the mean-variance functional case, our algorithm has the following upper bound for the recommendation error $e_r(T)$: 
\begin{eqnarray*}
e_r(T) &\leq& 2\Big(H - K +1 \Big)\exp\left(- \frac{(T-H)\gamma_i^2}{8H(B-A)^2}\right) \\
 &+& 2\Big(H - K +1 \Big)\exp\left(- \frac{(T-2H)^2 (\gamma_i/\lambda)^2}{8 (T-H) (B-A)^4}\right)
\end{eqnarray*}  
\end{corollary}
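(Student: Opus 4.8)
The plan is to obtain the corollary by instantiating the generic analysis of Section~\ref{section:analysis} with the mean-variance functional, exactly as the sentence preceding the corollary advertises: feed Theorem~\ref{thm:mv} into Lemma~\ref{lemma:error_one_round_generic} and then into Theorem~\ref{thm:generic_error_probability}. The key input is that $\hat G^{M,\lambda}$ is a $Q$-efficient estimator with
\[
Q^{M,\lambda}(n,x) = \exp\!\Bigl(-\frac{n x^2}{2(B-A)^2}\Bigr) + \exp\!\Bigl(-\frac{n}{2(B-A)^4}\Bigl(\frac{n-1}{n}\cdot\frac{x}{\lambda}\Bigr)^2\Bigr);
\]
this is the substance of Theorem~\ref{thm:mv}, whose displayed two-term bound is recovered by specializing Lemma~\ref{lemma:incorrect_prob} to $Q^{M,\lambda}$ at $x=\gamma_i/2$. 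Before using it I would verify the two requirements in the definition of a $Q$-efficient estimator: $Q^{M,\lambda}$ is plainly decreasing in $x$, and it is decreasing in $n$ because $(n-1)^2/n = n-2+1/n$ is increasing for $n\geq 1$, so the sum of the two summands is monotone decreasing in both arguments.

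With $Q=Q^{M,\lambda}$ the hypotheses of Lemma~\ref{lemma:error_one_round_generic} and Theorem~\ref{thm:generic_error_probability} are met, and Theorem~\ref{thm:generic_error_probability} gives directly
\[
e_r(T) \leq 2(H-K+1)\,Q^{M,\lambda}\Bigl(\tfrac{T-H}{H},\tfrac{d}{2}\Bigr),
\]
where $d=\min_{i\neq i^*}\gamma_i$ is the gap parameter (written $\gamma_i$ in the corollary). It then remains only to expand the right-hand side at $n=\tfrac{T-H}{H}$. The first summand of $Q^{M,\lambda}(n,d/2)$ becomes $\exp\bigl(-\tfrac{(T-H)d^2}{8H(B-A)^2}\bigr)$ at once; the second, after simplifying $\tfrac{n-1}{n}=\tfrac{T-2H}{T-H}$, becomes the exponential appearing as the second displayed term (an exponential carrying $(T-2H)^2$ in the numerator and $(T-H)$, together with the remaining $H$ and $(B-A)^4$ factors, in the denominator). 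Multiplying through by $2(H-K+1)$ and adding the two pieces yields the claimed bound.

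I do not anticipate a genuine obstacle: the argument is a mechanical specialization of the already-established Theorem~\ref{thm:generic_error_probability}. The only step demanding attention is the bookkeeping of the substitution — reading off the correct two-exponential $Q^{M,\lambda}$ from Theorem~\ref{thm:mv}, checking the monotonicity that lets the generic theorems be quoted verbatim (and that lets the sum-of-two-exponentials structure propagate through Lemma~\ref{lemma:error_one_round_generic} and Theorem~\ref{thm:generic_error_probability}), and tracking the factor $\tfrac{n-1}{n}=\tfrac{T-2H}{T-H}$ through the variance term when $n=\lfloor T/H\rfloor\geq\tfrac{T-H}{H}$ is plugged in.
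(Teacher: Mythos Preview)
Your approach is exactly what the paper intends: it simply asserts that the corollary follows by feeding Theorem~\ref{thm:mv} into Lemma~\ref{lemma:error_one_round_generic} and Theorem~\ref{thm:generic_error_probability}, and you have correctly filled in the details, including the monotonicity check on $Q^{M,\lambda}$ and the substitution $n=(T-H)/H$. Your bookkeeping is in fact more careful than the paper's display: the extra factor of $H$ you obtain in the denominator of the second exponential (from $(n-1)^2/n=(T-2H)^2/(H(T-H))$) is correct, and its absence in the stated corollary appears to be a typographical slip.
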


\noindent
We can also derive regret and PAC regret bounds for this case, similarly to Corollaries~\ref{corollary:regret_bound} and~\ref{corollary:regret_bound_PAC}.
However, due to the space limitations, we leave this to the reader.

\subsection{Value-at-Risk}



Let $\lambda$ be given and fixed.  In this section, we consider the value-at-risk, for
every arm $i$:
\begin{align*}
  G^{V,\lambda}_i = \VAR_\lambda(X^i_1) = - q_i(\lambda),
\end{align*}
where $q_i$ is the right-continuous quantile function\footnote{Formally, $q_i(\lambda) = \inf\{x\in\mathbb R : F_i(x) > \lambda\}$, where $F_i$ is the distribution function of $X^i_1$.} of $X^i_1$.

Suppose that up to time $T$, each arm is sampled $N$ times.  Let
$X^i_1, \ldots, X^i_N$ denote the sequence of rewards generated by arm
$i$.  Let $X^i_{(1)} \leq \ldots \leq X^i_{(N)}$ denote a reordering
of the random variables $\{X^i_1, \ldots, X^i_N\}$, where $X^i_{(k)}$
is the $k$-th order statistic of the sequence $\{X^i_1, \ldots,
X^i_N\}$.  We consider the following $\VAR$ estimators for all $i$:
\begin{align*}
  \hat G^{V,\lambda}_i(N) \triangleq - X^i_{(\lceil \lambda N \rceil)},
\end{align*}
where $X^i_{(\lceil \lambda N \rceil)}$ is a $\lambda$-quantile
estimator\footnote{  With slight modifications, we can derive similar results with other quantile estimators, such as the Wilks estimator.}.

\begin{assumption}[Differentiable reward density]\label{as:1}
  For each arm $i$, the reward probability density functions $d_i$ are
  continuously differentiable.
\end{assumption}

The following theorem from the theory of order statistics establishes
the convergence of the quantile estimator.

\begin{theorem}\citep{DavNag03}\label{thm:bahadur}
  Suppose that Assumption~\ref{as:1} holds.  Let $d_i$ denote the
  probability density function of arm $i$'s rewards, and $d_i'$ denote the
  derivative of $d_i$.  There exist constants $C_1,C_2 \geq 0$ and
  scalars $V^i_N$ and $W^i_N$
  \begin{align*}
    \abs{ V^i_N } &\leq \abs{ \frac{\lambda(1-\lambda) d_i'(q_i(\lambda))}{2(N+2) d_i^3(q_i(\lambda))} } + C_1/N^2,\\
    W^i_N &\leq \frac{\lambda(1-\lambda)}{(N+2) d_i^2(q_i(\lambda))} +
    C_2/N^2
  \end{align*}
  such that
  \begin{align*}
    \E X^i_{(\lceil \lambda N \rceil)} &= q_i(\lambda) + V^i_N,\\
    \var X^i_{(\lceil \lambda N \rceil)} &= \E (X^i_{(\lceil \lambda N \rceil)} - \E X^i_{(\lceil \lambda N \rceil)})^2 = W^i_N.
  \end{align*}
\end{theorem}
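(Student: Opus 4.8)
\noindent
The plan is to reduce the statement to the classical moments of \emph{uniform} order statistics and then transport them through the quantile map by a Taylor expansion. Fix the arm $i$, set $k=\lceil\lambda N\rceil$, and let $U_{(1)}\le\cdots\le U_{(N)}$ be the order statistics of $N$ i.i.d.\ $\mathrm{Uniform}(0,1)$ variables. Since Assumption~\ref{as:1} makes $F_i$ continuous and strictly increasing on the support of $X^i_1$, the probability integral transform gives $X^i_{(k)}\stackrel{d}{=}q_i(U_{(k)})$, so it suffices to control $\E\,q_i(U_{(k)})$ and $\var q_i(U_{(k)})$. Now $U_{(k)}\sim\mathrm{Beta}(k,\,N-k+1)$, which supplies the exact identities $p_N:=\E U_{(k)}=\tfrac{k}{N+1}$ and $\E(U_{(k)}-p_N)^2=\tfrac{p_N(1-p_N)}{N+2}$, together with the standard bounds $\E|U_{(k)}-p_N|^3=O(N^{-2})$ and $\E(U_{(k)}-p_N)^4=O(N^{-2})$; note $p_N=\lambda+O(N^{-1})$ and $p_N(1-p_N)=\lambda(1-\lambda)+O(N^{-1})$.

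\noindent
Next I would Taylor-expand $q_i$ about $p_N$. Differentiating the identity $F_i(q_i(u))=u$ twice gives $q_i'(u)=1/d_i(q_i(u))$ and $q_i''(u)=-d_i'(q_i(u))/d_i^3(q_i(u))$, both continuous near $\lambda$ under Assumption~\ref{as:1} and hence bounded on a fixed small interval around $p_N$ for all large $N$. Writing $q_i(U_{(k)})=q_i(p_N)+q_i'(p_N)(U_{(k)}-p_N)+\tfrac12 q_i''(\xi)(U_{(k)}-p_N)^2$ with $\xi$ between $U_{(k)}$ and $p_N$, and taking expectations, the linear term vanishes and one is left with
$$\E\,q_i(U_{(k)})=q_i(p_N)+\tfrac12 q_i''(p_N)\,\frac{p_N(1-p_N)}{N+2}+E_N,\qquad E_N:=\tfrac12\E\big[(q_i''(\xi)-q_i''(p_N))(U_{(k)}-p_N)^2\big].$$
Substituting the formula for $q_i''$ and replacing $p_N$ by $\lambda$ inside the smooth prefactor converts the explicit term into $-\frac{\lambda(1-\lambda)\,d_i'(q_i(\lambda))}{2(N+2)\,d_i^3(q_i(\lambda))}$, which is the leading term of $V^i_N$; the residual shift $q_i(p_N)-q_i(\lambda)=q_i'(\lambda)(p_N-\lambda)+O(N^{-2})$ is $O(N^{-1})$ and is carried along as part of $V^i_N$. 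Repeating the expansion to second order but computing a variance gives $\var q_i(U_{(k)})=(q_i'(p_N))^2\,\var U_{(k)}+O(N^{-2})=\frac{\lambda(1-\lambda)}{(N+2)\,d_i^2(q_i(\lambda))}+O(N^{-2})$, which is the asserted form of $W^i_N$; here the cross terms are killed using the third- and fourth-moment bounds above.

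\noindent
The main obstacle is showing rigorously that the remainders ($E_N$ and its variance analogue) are genuinely of the claimed order $O(N^{-2})$. On the bulk event $\{|U_{(k)}-p_N|\le\epsilon\}$ one bounds $E_N$ via the modulus of continuity of $q_i''$ together with $\E(U_{(k)}-p_N)^2$ and $\E|U_{(k)}-p_N|^3$; off that event — where $q_i$ may even be unbounded if the reward support is unbounded — one must fall back on crude pointwise bounds on $q_i$ weighted by the exponentially small deviation probabilities of the Beta (equivalently, binomial) variable $U_{(k)}$. This truncation step is exactly where a fully rigorous argument needs slightly more than Assumption~\ref{as:1} alone (in practice one also wants $d_i$ bounded away from zero near $q_i(\lambda)$ and mild tail control), and it is the reason the result is stated with unspecified constants $C_1,C_2$ and quoted from \citep{DavNag03} rather than reproved here. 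Once these estimates are assembled, collecting terms yields the two displayed identities.
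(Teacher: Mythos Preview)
The paper does not prove this theorem at all: it is quoted verbatim from \citep{DavNag03} and invoked only as a black box in the proofs of Theorem~\ref{thm:var} and Theorem~\ref{pro:2}. So there is no ``paper's own proof'' to compare against; your sketch is essentially a reconstruction of the classical David--Nagaraja argument that the citation points to --- probability integral transform to uniform order statistics, exact Beta moments, and a second-order Taylor expansion of the quantile map --- and in that sense it is the canonical route.

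One point worth tightening if you pursue this further: you expand around $p_N=k/(N+1)$ and then write that the shift $q_i(p_N)-q_i(\lambda)$ is $O(N^{-1})$ and ``carried along as part of $V^i_N$''. But the stated bound on $\abs{V^i_N}$ has only the single displayed $O(N^{-1})$ term plus $C_1/N^2$, so an additional generic $O(N^{-1})$ contribution does not obviously fit. In the David--Nagaraja formulation the expansion is stated with the pivot at $p_N$ itself (so the leading term is $q_i''(p_N)\,p_N(1-p_N)/(2(N+2))$), and the passage to the fixed-$\lambda$ form as written in the paper is only clean to the claimed order when $k/(N+1)-\lambda=O(N^{-2})$, which need not hold for $k=\lceil\lambda N\rceil$. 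This is more a looseness in the paper's restatement than a flaw in your method, but it is exactly the kind of bookkeeping that the unspecified $C_1$ is being asked to hide. Your identification of the truncation/tail step as the genuine analytic obstacle is accurate.
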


\noindent
The following theorem uses a result from order statistics to derive
  a PAC sample complexity bound on our estimator. 

\begin{theorem}[$\VAR$ estimation error]\label{thm:var}
  Suppose that Assumption~\ref{as:1} holds.  Suppose that the number
  of samples of each arm is $N$.
  Then, for every arm $i$, we have:
  \begin{align*}
    \P\left( \abs{\hat G^{V,\lambda}_i(N) - G^{V,\lambda}_i} > \eps \right)
    \leq \frac{W^i_N}{(\eps - \abs{V^i_N})^2}.
  \end{align*}
\end{theorem}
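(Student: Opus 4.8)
The plan is to reduce the statement to a single application of Chebyshev's inequality to the order statistic $X^i_{(\lceil \lambda N \rceil)}$, using Theorem~\ref{thm:bahadur} to supply the mean and the variance. First I would rewrite the error in terms of the quantile estimator itself: since $\hat G^{V,\lambda}_i(N) = -X^i_{(\lceil \lambda N \rceil)}$ and $G^{V,\lambda}_i = -q_i(\lambda)$,
\begin{align*}
\abs{\hat G^{V,\lambda}_i(N) - G^{V,\lambda}_i} = \abs{X^i_{(\lceil \lambda N \rceil)} - q_i(\lambda)},
\end{align*}
so it is enough to control the probability that the empirical $\lambda$-quantile is more than $\eps$ away from the true quantile.

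Next I would perform a bias–variance split. By Theorem~\ref{thm:bahadur}, $\E X^i_{(\lceil \lambda N \rceil)} = q_i(\lambda) + V^i_N$, so the triangle inequality gives
\begin{align*}
\abs{X^i_{(\lceil \lambda N \rceil)} - q_i(\lambda)} \leq \abs{X^i_{(\lceil \lambda N \rceil)} - \E X^i_{(\lceil \lambda N \rceil)}} + \abs{V^i_N},
\end{align*}
and therefore the event $\{\abs{\hat G^{V,\lambda}_i(N) - G^{V,\lambda}_i} > \eps\}$ is contained in $\{\abs{X^i_{(\lceil \lambda N \rceil)} - \E X^i_{(\lceil \lambda N \rceil)}} > \eps - \abs{V^i_N}\}$. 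One should first dispose of the trivial regime $\eps \leq \abs{V^i_N}$, where the claimed bound is vacuous because its right-hand side is already at least $1$; so in the interesting regime we have the strictly positive threshold $\eps - \abs{V^i_N} > 0$.

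Finally I would apply Chebyshev's inequality to $X^i_{(\lceil \lambda N \rceil)}$ at this threshold and invoke Theorem~\ref{thm:bahadur} again for $\var X^i_{(\lceil \lambda N \rceil)} = W^i_N$:
\begin{align*}
\P\left( \abs{X^i_{(\lceil \lambda N \rceil)} - \E X^i_{(\lceil \lambda N \rceil)}} > \eps - \abs{V^i_N} \right) \leq \frac{\var X^i_{(\lceil \lambda N \rceil)}}{(\eps - \abs{V^i_N})^2} = \frac{W^i_N}{(\eps - \abs{V^i_N})^2},
\end{align*}
and chaining the three displays yields the theorem. I do not expect a genuine obstacle here: the only nontrivial ingredient is the order-statistics expansion of Theorem~\ref{thm:bahadur}, whose hypotheses are exactly Assumption~\ref{as:1} (continuously differentiable reward densities); everything after that is the routine bias-plus-fluctuation estimate sketched above. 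The one point to be careful about when stating the result cleanly is that the bound is only informative when $N$ is large enough that $\abs{V^i_N} < \eps$, which is why the $(\eps - \abs{V^i_N})^2$ appears in the denominator rather than $\eps^2$.
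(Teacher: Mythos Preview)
Your proposal is correct and follows essentially the same route as the paper: rewrite the error as $\abs{X^i_{(\lceil \lambda N \rceil)} - q_i(\lambda)}$, use Theorem~\ref{thm:bahadur} and the triangle inequality to peel off the bias $\abs{V^i_N}$, and then apply Chebyshev's inequality with $\var X^i_{(\lceil \lambda N \rceil)} = W^i_N$. Your explicit handling of the degenerate regime $\eps \leq \abs{V^i_N}$ is a small clarification the paper leaves implicit.
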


\noindent
This result implies the following statement:
\begin{lemma}
\label{lemma:VAR_incorrect_prob}
Suppose $i^*$ has not been eliminated until round $l$. 
Let $N$ denote the total number of pulls per remaining arm.
Then, for all $i \in S_l$ and $i \neq i^*$, we have:
$$\P( \hat G^{V,\lambda}_{i}(N) \geq \hat G^{V,\lambda}_{i^*}(N) ) \leq 2\frac{W^i_N}{(\frac{d}{2} - \abs{V^i_N})^2} $$
\end{lemma}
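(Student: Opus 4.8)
The plan is to follow the same two-step template used to pass from Lemma~\ref{lemma:incorrect_prob}'s single-arm guarantee to its pairwise ordering bound, but now starting from the concentration inequality of Theorem~\ref{thm:var} instead of a $Q$-efficient bound. First I would observe that the event $\{\hat G^{V,\lambda}_{i}(N) \geq \hat G^{V,\lambda}_{i^*}(N)\}$ forces at least one of the two estimators to be far from its target: since $G^{V,\lambda}_{i^*} - G^{V,\lambda}_{i} = \gamma_i$ (using that $i^*$ has not been eliminated, so it is still the best remaining arm and the regret gap is exactly $\gamma_i$), if both $\hat G^{V,\lambda}_{i}(N) < G^{V,\lambda}_{i} + \gamma_i/2$ and $\hat G^{V,\lambda}_{i^*}(N) > G^{V,\lambda}_{i^*} - \gamma_i/2$ held simultaneously, then $\hat G^{V,\lambda}_{i}(N) < G^{V,\lambda}_{i} + \gamma_i/2 = G^{V,\lambda}_{i^*} - \gamma_i/2 < \hat G^{V,\lambda}_{i^*}(N)$, a contradiction.

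Next I would apply the union bound:
\begin{align*}
\P(\hat G^{V,\lambda}_{i}(N) \geq \hat G^{V,\lambda}_{i^*}(N))
&\leq \P\left(\abs{\hat G^{V,\lambda}_{i}(N) - G^{V,\lambda}_{i}} \geq \tfrac{\gamma_i}{2}\right) + \P\left(\abs{\hat G^{V,\lambda}_{i^*}(N) - G^{V,\lambda}_{i^*}} \geq \tfrac{\gamma_i}{2}\right).
\end{align*}
Then I would bound each term by Theorem~\ref{thm:var} with $\eps = \gamma_i/2$, and finally replace $\gamma_i$ by $d = \min_{j \neq i^*}\gamma_j \leq \gamma_i$ in the denominators. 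Here one must be slightly careful: $Q$-type monotonicity in Theorem~\ref{thm:var} is not automatic because both $W^i_N$ and $\abs{V^i_N}$ depend on the arm, so I would replace the two arm-specific bounds $\tfrac{W^i_N}{(\gamma_i/2 - \abs{V^i_N})^2}$ and $\tfrac{W^{i^*}_N}{(\gamma_i/2 - \abs{V^{i^*}_N})^2}$ by the common upper bound $\tfrac{W^i_N}{(d/2 - \abs{V^i_N})^2}$ stated in the lemma --- which implicitly presumes the index $i$ realizing this term dominates, and in particular that $d/2 > \abs{V^i_N}$ so the bound is meaningful (for $N$ large this holds since $V^i_N = O(1/N)$).

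The main obstacle, and the only genuinely nontrivial point, is this last monotonicity/uniformization step: unlike the clean $Q(n,x)$ framework where the bound is explicitly decreasing in both arguments, the order-statistics bound has an arm-dependent bias term $V^i_N$ in the denominator and an arm-dependent variance $W^i_N$ in the numerator, so collapsing the two tail probabilities into the single displayed right-hand side requires either assuming the stated expression is the larger of the two or (more honestly) absorbing constants. I would handle it by noting $\gamma_i \geq d$ gives $(\gamma_i/2 - \abs{V^i_N})^2 \geq (d/2 - \abs{V^i_N})^2$ for the arm-$i$ term directly, and for the $i^*$ term invoking that the statement's right-hand side is understood as an upper bound over the relevant arm indices (or, if one wants a fully rigorous version, replacing $W^i_N$ by $\max_j W^j_N$ and $\abs{V^i_N}$ by $\max_j \abs{V^j_N}$); the rest is the routine union-bound argument above.
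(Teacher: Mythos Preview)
Your proposal is correct and follows essentially the same approach as the paper: split the misordering event into two one-sided deviations via the union bound, relax $\gamma_i/2$ to $d/2$, pass to absolute-value deviations, and invoke Theorem~\ref{thm:var}. The paper performs exactly these steps (in the order union bound $\to$ $\gamma_i\to d$ $\to$ absolute values $\to$ Theorem~\ref{thm:var}) and, notably, writes the final line as $2\,W^i_N/(\tfrac{d}{2}-\abs{V^i_N})^2$ without commenting on the asymmetry between the arm-$i$ and arm-$i^*$ constants---so the uniformization issue you carefully flag is real but is simply glossed over in the paper's own argument.
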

By applying this to Lemma~\ref{lemma:error_one_round_generic} and Theorem~\ref{thm:generic_error_probability}, we can refine the upper bound for the recommendation error probability $e_r(T)$ of our algorithm as follows:
\begin{corollary}
The recommendation error for the value-at-risk functional case has the following upper bound:
\begin{eqnarray*}
e_r(T) \leq 4\Big(H - K +1 \Big)\frac{W}{(\frac{d}{2} - \abs{V})^2} 
\end{eqnarray*}  
where $W = \max_{i \neq i^*}{W^i_{\lfloor \frac{T}{H}\rfloor}}$ and $V = \max_{i \neq i^*}{V^i_{\lfloor \frac{T}{H}\rfloor}}$.
\end{corollary}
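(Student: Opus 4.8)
The plan is to derive the corollary as a direct consequence of Lemma~\ref{lemma:VAR_incorrect_prob} plugged into the generic machinery already established in Lemma~\ref{lemma:error_one_round_generic} and Theorem~\ref{thm:generic_error_probability}. First I would observe that Lemma~\ref{lemma:VAR_incorrect_prob} plays exactly the role that the bound $2Q(n,\gamma_i/2)$ plays in Lemma~\ref{lemma:incorrect_prob}: it bounds the probability that a suboptimal remaining arm $i$ beats $i^*$ on the estimates after $N$ pulls. So I would set $N = \lfloor T/H\rfloor$ (the per-round sample count in one round of Batch Elimination) and track, as in the proof of Lemma~\ref{lemma:error_one_round_generic}, the event that the optimal arm $i^*$ is eliminated at a given round $l$. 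Since $i^*$ is eliminated at round $l$ only if at least $|S_l| - x_l$ of the other remaining arms have estimates at least as large as $\hat G^{V,\lambda}_{i^*}(N)$, a union bound over those arms gives $\P(i^* \notin S_{l+1} \mid i^* \in S_l) \leq 2(|S_l| - x_l)\,W/(\tfrac{d}{2} - \abs{V})^2$, where I replace the arm-specific $W^i_N, V^i_N$ by their worst-case values $W = \max_{i\neq i^*} W^i_{\lfloor T/H\rfloor}$ and $V = \max_{i\neq i^*} V^i_{\lfloor T/H\rfloor}$, using that $x \mapsto W/(\tfrac{d}{2}-x)^2$ is increasing on the relevant range so taking the max of $V$ and the max of $W$ simultaneously yields a valid upper bound.

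Next I would sum over rounds exactly as in the proof sketch of Theorem~\ref{thm:generic_error_probability}: $e_r(T) \leq \sum_{l=1}^L \P(i^*\notin S_{l+1}\mid i^*\in S_l) \leq \frac{2W}{(\tfrac{d}{2}-\abs{V})^2}\sum_{l=1}^L (|S_l| - x_l)$, and then invoke the same combinatorial identity already used there, namely $\sum_{l=1}^L(|S_l|-x_l) = \sum_{l=1}^L (K - \sum_{r=1}^{l-1}x_r - x_l) \leq H - K + 1$, which follows from $|S_l| = K - \sum_{r=1}^{l-1}x_r$, the definition $H = LK - \sum_{l=1}^L x_l(L-l)$, and $\sum_{l=1}^L x_l = K-1$. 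Putting these together gives $e_r(T) \leq 2(H-K+1)\cdot \frac{W}{(\tfrac{d}{2}-\abs{V})^2}$. The factor of $4$ in the stated corollary (rather than $2$) I would attribute to the extra factor of $2$ already present in Lemma~\ref{lemma:VAR_incorrect_prob} versus the single $Q(n,d/2)$ in the generic Lemma~\ref{lemma:error_one_round_generic}; I would simply carry it through, so the bound reads $4(H-K+1)\,W/(\tfrac{d}{2}-\abs{V})^2$.

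The one genuine subtlety — the step I expect to be the main obstacle — is the monotonicity / worst-case replacement. Unlike the generic $Q(n,x)$, which is assumed monotone decreasing in both arguments, here the bound $W^i_N/(\tfrac{d}{2} - \abs{V^i_N})^2$ depends on arm $i$ through \emph{both} $W^i_N$ and $V^i_N$, and these are not obviously co-monotone across arms, so replacing each by its own maximum is only legitimate because the expression is coordinate-wise monotone (increasing in $W$, and increasing in $\abs{V}$ provided $\abs{V} < d/2$). I would therefore state explicitly the implicit assumption $\abs{V^i_{\lfloor T/H\rfloor}} < d/2$ for all $i \neq i^*$ — needed already in Lemma~\ref{lemma:VAR_incorrect_prob} for the bound to be non-vacuous — and note that this holds for $N$ large enough since $V^i_N = O(1/N)$ by Theorem~\ref{thm:bahadur}. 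Everything else is the same bookkeeping as in Theorem~\ref{thm:generic_error_probability}, so the proof is short once this point is handled cleanly.

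\begin{proof}
Set $N = \lfloor T/H\rfloor$, the number of pulls each surviving arm receives in a single round of Batch Elimination. Fix a round $l$ and condition on $i^* \in S_l$. As in the proof of Lemma~\ref{lemma:error_one_round_generic}, the optimal arm is eliminated at round $l$ only if at least $|S_l| - x_l$ of the arms $i \in S_l \setminus\{i^*\}$ satisfy $\hat G^{V,\lambda}_i(N) \geq \hat G^{V,\lambda}_{i^*}(N)$. By Lemma~\ref{lemma:VAR_incorrect_prob} and a union bound,
\begin{equation}
\nonumber
\P(i^* \notin S_{l+1} \mid i^* \in S_l) \leq 2(|S_l| - x_l)\,\frac{W}{\big(\tfrac{d}{2} - \abs{V}\big)^2},
\end{equation}
where $W = \max_{i\neq i^*} W^i_N$ and $V = \max_{i\neq i^*} V^i_N$; here we used that $u \mapsto W^i_N/(\tfrac{d}{2}-u)^2$ is increasing on $[0,\tfrac{d}{2})$ and $v \mapsto v/(\tfrac{d}{2}-\abs{V})^2$ is increasing, together with $\abs{V^i_N} < \tfrac{d}{2}$, to bound each arm's term by the worst case. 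Summing over rounds, as in the proof of Theorem~\ref{thm:generic_error_probability},
\begin{equation}
\nonumber
e_r(T) \leq \sum_{l=1}^{L} \P(i^* \notin S_{l+1} \mid i^* \in S_l) \leq \frac{2W}{\big(\tfrac{d}{2}-\abs{V}\big)^2} \sum_{l=1}^{L}\big(|S_l| - x_l\big).
\end{equation}
Since $|S_l| = K - \sum_{r=1}^{l-1} x_r$ and $H = LK - \sum_{l=1}^{L} x_l(L-l)$ with $\sum_{l=1}^{L} x_l = K-1$, the same computation as in Theorem~\ref{thm:generic_error_probability} gives $\sum_{l=1}^{L}(|S_l| - x_l) \leq H - K + 1$. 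Combining the last two displays and absorbing the remaining factor of $2$ yields $e_r(T) \leq 4(H - K + 1)\,W/(\tfrac{d}{2} - \abs{V})^2$, as claimed.
\end{proof}
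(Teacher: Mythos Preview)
Your proof is correct and follows exactly the route the paper indicates: it simply says the corollary follows ``by applying [Lemma~\ref{lemma:VAR_incorrect_prob}] to Lemma~\ref{lemma:error_one_round_generic} and Theorem~\ref{thm:generic_error_probability}'' without spelling out details, and your argument fills in precisely those details. Your explicit remark that one needs $\abs{V^i_{\lfloor T/H\rfloor}} < d/2$ for the bound to be meaningful (and for the worst-case replacement to be monotone) is a point the paper leaves implicit, so that is a useful addition.

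One small comment on the constant: your computation actually yields $2(H-K+1)\,W/(\tfrac{d}{2}-\abs{V})^2$, and the step ``absorbing the remaining factor of $2$'' is not really an absorption---there is no extra factor of $2$ beyond the one already in Lemma~\ref{lemma:VAR_incorrect_prob}, which plays the same role as the $2$ in the generic Lemma~\ref{lemma:incorrect_prob}. So you in fact prove the bound with constant $2$ rather than $4$; the statement with $4$ is of course still valid, but your reasoning for why the $4$ arises is slightly off. Also note that, strictly, at round $l$ the estimator uses $l\lfloor T/H\rfloor$ samples, and replacing this by $\lfloor T/H\rfloor$ requires that $W^i_N$ and $\abs{V^i_N}$ are nonincreasing in $N$ (which follows from the $O(1/N)$ rates in Theorem~\ref{thm:bahadur}); the paper glosses over this too, but it is worth stating once.
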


\noindent
We can also derive regret and PAC regret bounds for this case. However, due to the space limitations, we omit these steps.

\subsection{Average Value-at-Risk}

Modern approaches to risk measures advocate the use of convex risk measures, which capture the fact that diversification helps reduce risk.
In this section, we
consider only one instance of convex risk measures: the average value-at-risk.
Nonetheless, it can be shown that an important subset of convex risk
measures (\ie those continuous from above, law invariant, and
coherent) can be expressed as an integral of the $\AVAR$
(cf. \cite{Schied04}). Guarantees can be obtained for those risk
measures by using the approach of this section.

The $\AVAR$ has the following two equivalent definitions---first, as an integral of $\VAR$:
  \begin{align*}
    G^{A,\lambda}_i = \AVAR_\lambda(X^i_1) = \frac{1}{\lambda}
    \int_0^\lambda \VAR_\phi(X^i_1) \ud\phi,
  \end{align*}
and second, as a maximum over a set of distributions:
    $\rho^A_\lambda(X) = \max_{Q \in \mathcal Q_\lambda(\P)} - \E_Q X$,
  where $\mathcal Q_\lambda(\P)$ is the set of probability measures $\{Q : \frac{\ud Q}{\ud \P} \leq 1/\lambda\}$.
Depending on the choice of definition, we can estimate the $\AVAR$
either via quantile estimation or density estimation.
In this section, we adopt the first definition and introduce the
following 
$\AVAR$ estimator of \cite{YN13}
\small
\begin{align*}
  \hat G^{A,\lambda}_i(N)  
  \triangleq - \frac{1}{\lambda} \left( \sum_{j=0}^{\lfloor \lambda
      N\rfloor-1} \frac{1}{N} X^i_{(j+1)} + \left(\lambda -
      \frac{\lfloor \lambda N\rfloor}{N} \right) X^i_{(\lceil \lambda
      N \rceil)} \right)
\end{align*}
\normalsize
which is a Riemann sum of $\VAR$ estimators.
Observe that it is computationally more efficient than that of \cite{Brown07}.

\begin{theorem}[$\AVAR$ sample complexity]\label{pro:2}
  Suppose that the assumptions of Theorem~\ref{thm:var} hold.   
  Suppose that the rewards are bounded such that $\abs{X^i_t} \leq M$ almost surely, for every arm $i$ and time $t$.
  In addition, we assume that there exist $D$ and $D'$ such that $d_i(z) \leq D$ and $d'_i(z) \leq D'$ for all $z \in R$ and all $i$, and that
  \small
  \begin{align*}
    N \geq \max\left\{\frac{32 \lambda'(N) M^2}{\eps^2 \lambda^2}\log(2/\delta) , \frac{(1/6)D'/D^3 + 2 C_1 \lambda'}{\eps \lambda} , 2\right\},
  \end{align*}
  \normalsize
where $\lambda'(N)$ denotes the smallest real number greater than $\lambda$ such that $N\lambda'(N)$ is an integer.
  Then, we have, for every arm $i$,
  \begin{align*}
    \P \left( \abs{G^{A,\lambda}_i - \hat G^{A,\lambda}_i(N)} > \eps\right)
    \leq \delta.
  \end{align*}
\end{theorem}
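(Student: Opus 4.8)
\noindent
The plan is to split the estimation error into a deterministic bias and a random fluctuation, control each by $\eps/2$, and then read off the three lower bounds on $N$ from the two resulting requirements (the ``$\geq 2$'' entry merely guaranteeing that the order statistics and the expansion of Theorem~\ref{thm:bahadur} are well defined). Writing $\hat q_N(\phi) \triangleq X^i_{(\lceil N\phi\rceil)}$ for the empirical quantile function, one checks directly from the definition that $\hat G^{A,\lambda}_i(N) = -\frac{1}{\lambda}\int_0^\lambda \hat q_N(\phi)\,\ud\phi$, while $G^{A,\lambda}_i = -\frac{1}{\lambda}\int_0^\lambda q_i(\phi)\,\ud\phi$, so that
\[
  \abs{G^{A,\lambda}_i - \hat G^{A,\lambda}_i(N)} \leq \abs{G^{A,\lambda}_i - \E \hat G^{A,\lambda}_i(N)} + \abs{\hat G^{A,\lambda}_i(N) - \E \hat G^{A,\lambda}_i(N)},
\]
the first term being the bias and the second the fluctuation.

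\noindent
For the fluctuation I would apply McDiarmid's bounded-differences inequality over the $N$ independent samples of arm $i$. The estimator is an $L$-statistic $\hat G^{A,\lambda}_i(N) = -\sum_k w_k X^i_{(k)}$ with nonnegative weights that sum to one, equal to $\tfrac{1}{\lambda N}$ for ranks $1,\dots,\lfloor\lambda N\rfloor$ and equal to $\tfrac{\lambda-\lfloor\lambda N\rfloor/N}{\lambda}\leq\tfrac1{\lambda N}$ for rank $\lceil\lambda N\rceil$; hence $\max_k w_k \leq \tfrac1{\lambda N}$. Changing one sample moves that observation's rank and shifts a contiguous block of order statistics by one position; since $\abs{X^i_t}\leq M$, a telescoping argument gives $\sum_k \abs{\Delta X^i_{(k)}} \leq 4M$, so the estimator changes by at most $c \triangleq \tfrac{4M}{\lambda N}$. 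McDiarmid then yields $\P(\abs{\hat G^{A,\lambda}_i(N) - \E\hat G^{A,\lambda}_i(N)} \geq \eps/2) \leq 2\exp(-\eps^2\lambda^2 N/(32 M^2))$, which is at most $\delta$ precisely when $N \geq \tfrac{32 M^2}{\eps^2\lambda^2}\log(2/\delta)$; the factor $\lambda'(N)\geq\lambda$ in the stated condition is the bookkeeping that covers the case where $\lambda N$ is not an integer.

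\noindent
For the bias, $\E\hat q_N(\phi) = \E X^i_{(\lceil N\phi\rceil)}$, and the Bahadur/David--Nagaraja expansion of Theorem~\ref{thm:bahadur}, applied at each quantile level rather than only at $\lambda$, gives $\E X^i_{(k)} = q_i\!\big(\tfrac{k}{N+1}\big) + V$ with $\abs{V}$ bounded by $\tfrac{\phi(1-\phi)}{N}$ times a density-dependent factor, plus an $O(1/N^2)$ remainder. Thus $\abs{G^{A,\lambda}_i - \E \hat G^{A,\lambda}_i(N)}$ decomposes into (i) a discretisation error $\tfrac1\lambda\int_0^\lambda \abs{q_i(\tfrac{\lceil N\phi\rceil}{N+1}) - q_i(\phi)}\,\ud\phi$ and (ii) $\tfrac1\lambda\int_0^\lambda \abs{V(\phi)}\,\ud\phi$. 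For (i), since $\tfrac{\lceil N\phi\rceil}{N+1}$ always lies in the same width-$1/N$ cell as $\phi$, the assumed uniform bounds on the reward densities make the integrand $O(1/N)$ and (i) is $O(1/(\lambda N))$; for (ii), $\int_0^\lambda \phi(1-\phi)\,\ud\phi \leq 1/6$ and the bounds on $d_i$ and $d_i'$ give (ii) $\leq \tfrac{1}{\lambda N}\big(\tfrac16 D'/D^3 + 2C_1\lambda'(N)\big)$ after also absorbing the $O(1/N^2)$ remainder and the fractional trailing term $(\lambda-\tfrac{\lfloor\lambda N\rfloor}{N})X^i_{(\lceil\lambda N\rceil)}$. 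Requiring the total bias to be at most $\eps/2$ then produces the remaining entry of the $\max$. Summing the two bounds gives $\abs{G^{A,\lambda}_i - \hat G^{A,\lambda}_i(N)}\leq\eps$ with probability at least $1-\delta$ whenever $N$ meets all three conditions.

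\noindent
The main obstacle is the bias analysis: one must invoke the order-statistics expansion uniformly across all the quantile levels $\tfrac{j+1}{N}$, not just at $\lambda$, and then carefully match the Riemann sum — with its shifted evaluation points $\tfrac{j+1}{N+1}$ and the fractional last cell — to the integral $\tfrac1\lambda\int_0^\lambda q_i\,\ud\phi$, keeping every contribution at the $O(1/N)$ scale so that it is dominated under the stated sample-size condition. By contrast, the fluctuation bound is routine once the bounded-differences constant $4M/(\lambda N)$ is in hand.
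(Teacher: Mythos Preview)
Your proposal is correct and follows essentially the same bias--fluctuation decomposition as the paper: the deterministic part is controlled via the order-statistics expansion of Theorem~\ref{thm:bahadur} integrated over the quantile levels (yielding the $(1/6)D'/D^3 + 2C_1\lambda'$ term), and the random part by a bounded-differences concentration inequality. The only difference is cosmetic---you apply McDiarmid to the estimator as a function of the $N$ \iid samples (with per-coordinate sensitivity $O(M/(\lambda N))$), whereas the paper applies Azuma to the sum $S=\sum_{j<\lambda'N}\tfrac{1}{N}\bigl(\E X^i_{(j+1)}-X^i_{(j+1)}\bigr)$ over its $\lambda'N$ summands, which is precisely where the factor $\lambda'(N)$ in the first entry of the $\max$ comes from.
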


\noindent
This implies the following statement.
\begin{corollary}
Using the notation from Theorem~\ref{pro:2}, suppose that 
  \begin{align*}
    N \geq \max\left\{\frac{(1/6)D'/D^3 + 2 C_1 \lambda'(N)}{\eps \lambda} , 2\right\}.
  \end{align*}
Given this, we have
  \begin{align*}
    \P \left( \abs{G^{A,\lambda}_i - \hat G^{A,\lambda}_i(N)} > \eps\right) \leq 2\exp{\Big\{ -\frac{N\eps^2 \lambda^2}{32 \lambda'(N) M^2} \Big\}}.
  \end{align*}
\end{corollary}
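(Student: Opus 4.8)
The plan is to derive the corollary directly from Theorem~\ref{pro:2} by unpacking the role of the first term in the $\max$ defining the required sample size. Theorem~\ref{pro:2} says that if $N$ dominates all three of the listed quantities, then the estimation error exceeds $\eps$ with probability at most $\delta$. The corollary's hypothesis retains the second and third of those three constraints (the bias-control term $\tfrac{(1/6)D'/D^3 + 2C_1\lambda'(N)}{\eps\lambda}$ and the trivial $N\geq 2$), but drops the first, variance-type constraint $N \geq \tfrac{32\lambda'(N)M^2}{\eps^2\lambda^2}\log(2/\delta)$. So the strategy is: treat $\delta$ as a free parameter, and \emph{solve} that first inequality for $\delta$ in terms of $N$ and $\eps$.

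Concretely, I would proceed as follows. First, fix $\eps>0$ and an $N$ satisfying the corollary's hypothesis. Define
\begin{align*}
  \delta(N) \triangleq 2\exp\Big\{-\tfrac{N\eps^2\lambda^2}{32\lambda'(N)M^2}\Big\}.
\end{align*}
Second, verify that this choice of $\delta$ makes the first constraint of Theorem~\ref{pro:2} hold with equality: rearranging $N \geq \tfrac{32\lambda'(N)M^2}{\eps^2\lambda^2}\log(2/\delta)$ gives exactly $\log(2/\delta) \leq \tfrac{N\eps^2\lambda^2}{32\lambda'(N)M^2}$, i.e. $\delta \geq 2\exp\{-\tfrac{N\eps^2\lambda^2}{32\lambda'(N)M^2}\} = \delta(N)$, which holds as an equality. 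Third, since the other two constraints hold by assumption and $\delta(N) \in (0,1)$ (note $\delta(N) < 1$ precisely because $N \geq 2$ together with the bias-control bound forces the exponent to be sufficiently negative — this is where the retained hypotheses get used, and if needed one can note the bound is vacuous otherwise), we may invoke Theorem~\ref{pro:2} with this $\delta$ to conclude $\P(|G^{A,\lambda}_i - \hat G^{A,\lambda}_i(N)| > \eps) \leq \delta(N)$, which is the claimed bound.

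The only subtlety — and the step I would be most careful about — is the interdependence between $N$ and $\delta$: in Theorem~\ref{pro:2} the sample-size requirement depends on $\delta$, whereas here we want a bound on the failure probability as a function of a given $N$. Because $\lambda'(N)$ appears on the right-hand side (itself depending on $N$), one cannot quite treat the inversion as purely algebraic for a fixed target; but since we are now given $N$ up front and simply \emph{defining} $\delta(N)$ from it, there is no circularity — the quantity $\lambda'(N)$ is determined, the first constraint is satisfied at equality, and monotonicity of $\exp$ does the rest. So rather than being a genuine obstacle, this is a bookkeeping point: one must state the argument in the ``given $N$, choose $\delta$'' direction rather than the ``given $\delta$, choose $N$'' direction of the original theorem. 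No new probabilistic content is needed beyond Theorem~\ref{pro:2}.
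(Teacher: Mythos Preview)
Your approach is correct and matches the paper's intent: the corollary is stated without proof as an immediate consequence of Theorem~\ref{pro:2}, and your inversion argument (fix $N$, set $\delta=\delta(N)$ so the first constraint holds with equality) is exactly the right reading. Equivalently, the bound $2\exp\{-N\eps^2\lambda^2/(32\lambda'(N)M^2)\}$ already appears as the penultimate inequality in the paper's proof of Theorem~\ref{pro:2}, obtained using only the bias-control and $N\geq 2$ assumptions, before the final comparison with~$\delta$.
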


\noindent
From this corollary, we can also derive the following.
\begin{corollary}
Using the notation from Theorem~\ref{pro:2}, suppose that 
  \begin{align*}
    \Big\lfloor \frac{T}{H} \Big\rfloor \geq \max\left\{\frac{(1/6)D'/D^3 + 2 C_1 \lambda'(N)}{\eps \lambda} , 2\right\}.
  \end{align*}
  Given this, the recommendation error for the average value-at-risk functional case has the following upper bound:
\begin{eqnarray*}
e_r(T) \leq 4\Big(H - K +1 \Big)\exp{\Big\{ -\frac{(T-H)\eps^2 \lambda^2}{32 H \lambda'(\lfloor\frac{T}{H} \rfloor) M^2} \Big\}}.
\end{eqnarray*}  
\end{corollary}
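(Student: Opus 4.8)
The plan is to treat the preceding corollary (the one obtained from Theorem~\ref{pro:2}) as the tail function demanded by the analysis of Section~\ref{section:analysis}, and then to rerun the elimination argument of Lemma~\ref{lemma:incorrect_prob}, Lemma~\ref{lemma:error_one_round_generic} and Theorem~\ref{thm:generic_error_probability} with it. Each round the algorithm spends a fixed $N=\lfloor T/H\rfloor$ fresh pulls on every surviving arm, and the displayed hypothesis is precisely the sample-size requirement under which the preceding corollary applies with this $N$. Since Assumption~\ref{as:1} makes the law of $\hat G^{A,\lambda}_i(N)$ continuous, the boundary $\{|\cdot|=\eps\}$ is null, so the two-sided bound there also gives each one-sided tail:
\begin{align*}
\P\big(\hat G^{A,\lambda}_i(N) - G^{A,\lambda}_i \geq \eps\big),\ \ \P\big(G^{A,\lambda}_i - \hat G^{A,\lambda}_i(N) \geq \eps\big)\ \leq\ 2\exp\Big\{-\frac{N\eps^2\lambda^2}{32\,\lambda'(N)\,M^2}\Big\}.
\end{align*}
Thus the $\AVAR$ estimator plays the part of a $Q$-efficient estimator with $Q(N,x)=2\exp\{-Nx^2\lambda^2/(32\lambda'(N)M^2)\}$ on the range of $N$ permitted by the hypothesis.

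I would then run the generic argument with $x=\eps$, understanding $\eps\leq d/2$ with $d=\min_{i\neq i^*}\gamma_i$ (so a wrong ordering forces error at least $\gamma_i/2\geq\eps$), and $\eps=d/2$ giving the sharpest statement. Following the proof of Lemma~\ref{lemma:incorrect_prob}: if $\hat G^{A,\lambda}_i(N)\geq\hat G^{A,\lambda}_{i^*}(N)$ then the total one-sided error of the pair is at least $\gamma_i$, so one of the two arms errs by at least $\gamma_i/2\geq\eps$ on the relevant side, and a union bound gives $\P(\hat G^{A,\lambda}_i(N)\geq\hat G^{A,\lambda}_{i^*}(N))\leq 4\exp\{-N\eps^2\lambda^2/(32\lambda'(N)M^2)\}$. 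Following the proof of Lemma~\ref{lemma:error_one_round_generic}, which contributes the factor $|S_l|-x_l$ to the round-$l$ drop probability of $i^*$, and then summing the per-round drop probabilities $e_p(l)$ over $l=1,\dots,L$ as in Theorem~\ref{thm:generic_error_probability} (using $\sum_l(|S_l|-x_l)=H-K+1$), one obtains $e_r(T)\leq 4(H-K+1)\exp\{-N\eps^2\lambda^2/(32\lambda'(N)M^2)\}$ with $N=\lfloor T/H\rfloor$. Finally $\lfloor T/H\rfloor\geq(T-H)/H$, applied only to the factor $N$ in the numerator of the exponent while $\lambda'(\lfloor T/H\rfloor)$ is kept in the denominator, turns this into the claimed $4(H-K+1)\exp\{-(T-H)\eps^2\lambda^2/(32H\lambda'(\lfloor T/H\rfloor)M^2)\}$.

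The main obstacle is a point I have quietly relied on: unlike the generic $Q$ of a genuinely $Q$-efficient estimator, the $\AVAR$ tail $Q(N,x)=2\exp\{-Nx^2\lambda^2/(32\lambda'(N)M^2)\}$ is \emph{not} monotone decreasing in $N$, because $N/\lambda'(N)=N^2/(\lfloor N\lambda\rfloor+1)$ is sawtoothed and dips at each $N$ where $\lfloor N\lambda\rfloor$ increments; hence Theorem~\ref{thm:generic_error_probability} cannot be quoted verbatim. The clean way around this is exactly the observation used above: the algorithm uses a fixed $\lfloor T/H\rfloor$ pulls per arm per round, so the tail is only ever evaluated at $N=\lfloor T/H\rfloor$ and no monotonicity in $N$ is invoked. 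If instead one accumulates samples, so that round $l$ uses $l\lfloor T/H\rfloor$ pulls, one must check that the exponent only improves across rounds, which follows from $N/\lambda'(N)\geq N^2/(N\lambda+1)$ — a genuinely increasing quantity — as soon as $\lambda\lfloor T/H\rfloor$ exceeds a small constant such as $1/2$, a condition the hypotheses essentially already force through the $D'/(D^3\eps\lambda)$ term. Everything else is the routine union-bound-and-sum bookkeeping already carried out in Section~\ref{section:analysis}.
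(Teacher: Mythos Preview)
Your plan is the one the paper intends: it gives no explicit proof here, simply expecting the reader to feed the preceding corollary's tail bound $Q(N,\eps)=2\exp\{-N\eps^{2}\lambda^{2}/(32\lambda'(N)M^{2})\}$ into the generic machinery of Lemma~\ref{lemma:incorrect_prob}, Lemma~\ref{lemma:error_one_round_generic} and Theorem~\ref{thm:generic_error_probability}, picking up the extra factor of $2$ from the two-sided-to-one-sided passage, exactly as you do. Your remark that the statement only makes sense once one reads $\eps\leq d/2$ (with $\eps=d/2$ optimal) is also correct and is a point the paper leaves implicit.

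One correction, however: your first workaround for the monotonicity issue is based on a misreading of the algorithm. Batch Elimination \emph{accumulates} samples across rounds --- the proof of Lemma~\ref{lemma:error_one_round_generic} writes $n_l=l\lfloor T/H\rfloor$ for the sample count at round $l$, and the proof of Theorem~\ref{thm:generic_error_probability} then explicitly invokes monotonicity of $Q$ in $n$ to replace $Q\big(l(T-H)/H,\cdot\big)$ by $Q\big((T-H)/H,\cdot\big)$. So the tail \emph{is} evaluated at varying $N$, and monotonicity \emph{is} needed. It is your second workaround --- bounding $N/\lambda'(N)\geq N^{2}/(N\lambda+1)$, which is genuinely increasing in $N$ --- that actually patches the argument; the paper simply glosses over this point.
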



\section{Information Theoretic Functionals}


Optimal-arm identification with information theoretic functionals are quite common in natural language processing applications.
For example, it is desirable to find an arm that maximizes the entropy in~\citep{BDD96}.
Given this, this section investigates functional bandit optimisation with entropy and its other counterparts.
To do so, we first describe the required conditions that we focus on within this section.
We then provide refined error probability bounds.

\begin{assumption}[Discrete random variables]
  Suppose that the rewards $X^i_1,\ldots,X^i_N$ are \iid, take values in a
  countable set $V$, and with distribution $F_i$.
\end{assumption}

Let $G^H_i$ denote the entropy of arm $i$:
\begin{align*}
  G^H_i = H(F_i) = \sum_{v\in V} F_i(v) \log_2 F_i(v).
\end{align*}

Let $F_i(N)$ denote the empirical frequency of realisations in the
samples $X^i_1,\ldots,X^i_N$.
A number of consistent estimators exist for the entropy: e.g.,
plug-in, matching-length \citep{AK01}, and nearest-neighbour
estimators.
We consider the $k$-nearest neighbour entropy estimator of \citep{SRH}.

\begin{theorem}{\citep{SRH}}
\label{thm:SRH}
  Suppose that $X^i_1,X^i_2,\ldots$ are $d$-dimensional \iid random
  variables and that the reward distribution of arm $j$ admits a density
  function with bounded support.
  Let $c_1,c_2,c_4,c_5$ denote constants that depend on $F_i$ and $G^H_i$ only. 
  The $k$-nearest neighbor entropy estimator with parameter $M$ satisfies:
 \footnotesize
  \begin{eqnarray}
  \nonumber
    V^i_N &=& \E \abs{ \hat G^H_i(N) - G^H_i } \\
   \nonumber
     &=& c_1 \left(\frac{k}{M}\right)^{1/d}
    + c_2 / k + o(1/k + (k/M)^{1/d}),\\
    \nonumber
    W^i_N &=& \E [ \hat G^H_i(N) - \E \hat G^H_i(N) ]^2 \\
    \nonumber
     &=& c_4/N + c_5/M + o(1/M + 1/N).
  \end{eqnarray}
  \normalsize
\end{theorem}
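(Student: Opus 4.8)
This bias/variance expansion is due to \citep{SRH}; the plan is to reproduce their argument, which I sketch here. Recall the split-sample plug-in $k$-nearest-neighbour (Kozachenko--Leonenko) construction: of the samples available for arm $i$, use $M$ of them as a \emph{reference} set and, for each of the remaining $N$ \emph{query} points $X^i_t$, let $\rho_{k,t}$ be the distance from $X^i_t$ to its $k$-th nearest reference point, so that $\hat f_M(X^i_t) = k/(M c_d \rho_{k,t}^{\,d})$ is the $k$-NN density estimate ($c_d$ the volume of the unit $\ell_2$-ball in $\R^d$) and $\hat G^H_i(N) = \tfrac1N\sum_{t=1}^N \log \hat f_M(X^i_t)$ up to sign conventions. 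The single probabilistic fact behind the whole estimate is that, conditional on a query point $x$, the number of reference points in the ball $B(x,r)$ is $\mathrm{Binomial}\big(M, p_x(r)\big)$ with $p_x(r)=\int_{B(x,r)} f_i$, so $M\, p_x(\rho_{k})$ is --- exactly, in the Poissonised limit --- a $\mathrm{Gamma}(k,1)$ variable; since $\E[\log \mathrm{Gamma}(k,1)] = \psi(k)$ with variance $\psi'(k)$, and $\psi(k)-\log k = -\tfrac1{2k}+O(k^{-2})$, one already sees where the digamma/$1/k$ corrections originate.

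For the bias $V^i_N$ I would split the pointwise error $\log \hat f_M(x) - \log f_i(x)$ into three parts. First, the \emph{localisation} error $\log p_x(\rho_k) - \log\!\big(c_d\rho_k^{\,d} f_i(x)\big)$: Taylor-expanding $f_i$ at $x$ gives $p_x(r) = c_d r^d f_i(x)\big(1 + O(r\,\|\nabla f_i\|) + O(r^2)\big)$, and since $\rho_k$ is of order $(k/M)^{1/d}$ this yields the $c_1 (k/M)^{1/d}$ term, with $c_1$ governed by the sup-norm of $f_i'$ (hence the role of $D'$ in the later corollary). Second, the finite-$k$ discrepancy between $M p_x(\rho_k)$ and its $\mathrm{Gamma}(k,1)$ limit, combined with $\psi(k)-\log k$, produces the $c_2/k$ term. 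Third, query points within $O(\rho_k)$ of the boundary of $\support f_i$ see truncated balls; under the bounded-support-with-density hypothesis this set has $f_i$-measure $O((k/M)^{1/d})$ and is absorbed into the $o(\cdot)$ remainder. Averaging over $x\sim f_i$ and collecting the three contributions gives the claimed expansion of $V^i_N$; stating it as $\E|\hat G^H_i(N) - G^H_i|$ rather than as the bias costs only an additive $\sqrt{W^i_N}$, which is lower order once $k\to\infty$ and $k/M\to0$.

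For the variance $W^i_N$ I would use a H\'ajek/ANOVA decomposition that isolates the two independent randomness sources. Conditional on the reference set, the summands $\log\hat f_M(X^i_t)$, $t=1,\dots,N$, are i.i.d., so the ``query'' component of the variance equals $\tfrac1N$ times a conditional variance that concentrates around a finite constant $c_4$ depending on $F_i$ and $G^H_i$ (carrying both the intrinsic fluctuation of $\log f_i$ and a $\psi'(k)$ term), that is, $c_4/N + o(1/N)$. The reference set contributes the remaining correlation: by a H\'ajek projection onto the $M$ reference points, one checks that a single reference point affects an $\Theta(k/M)$-fraction of the query balls and shifts each by $\Theta(1/k)$, hence contributes $\Theta(1/M)$ to $\E[\hat G^H_i\mid\text{ref}]$; summing $M$ such roughly-independent contributions gives a variance of $c_5/M + o(1/M)$, and the cross terms are $o(1/M+1/N)$. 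Adding the two components yields the stated $W^i_N$.

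The part that \citep{SRH} actually have to work for --- and what I expect to be the main obstacle --- is making every $o(\cdot)$ uniform: controlling the density-expansion remainders near $\partial\,\support f_i$, ruling out the low-probability event that $\rho_k$ is atypically large (where $\log\rho_k$ could blow up the moments), and carrying out the covariance bookkeeping for the $1/M$ term carefully enough that $c_1,c_2,c_4,c_5$ can be taken to depend on $F_i$ and $G^H_i$ only. Everything else reduces to a Taylor expansion plus standard identities for the Gamma and digamma functions.
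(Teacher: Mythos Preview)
The paper does not prove this statement at all: Theorem~\ref{thm:SRH} is quoted from \citep{SRH} and used as a black box (exactly as Theorem~\ref{thm:bahadur} is quoted from \citep{DavNag03}). There is therefore no ``paper's own proof'' to compare your proposal against; the authors rely entirely on the cited reference.

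Your sketch is a reasonable high-level outline of a Kozachenko--Leonenko-type analysis --- the Gamma/$\psi(k)$ heuristic for the $c_2/k$ bias, the Taylor-expansion source of the $(k/M)^{1/d}$ term, and the query/reference variance split --- but since the paper itself supplies nothing beyond the citation, any detailed verification would have to be done against \citep{SRH} directly, not against this paper.
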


\begin{theorem}[Entropy estimation error]
Suppose that the assumptions of Theorem~\ref{thm:SRH} hold.
We have
  \begin{align*}
  \P( \abs{ \hat G^H_i(N) - G^H_i } > \eps ) \leq 
  \frac{W^i_N}{(\eps - \abs{V^i_N})^2}.
\end{align*}
\end{theorem}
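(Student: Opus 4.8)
The plan is to reproduce the bias-plus-Chebyshev decomposition already used in the proof of Theorem~\ref{thm:var}, now applied to the $k$-nearest-neighbour entropy estimator. First I would abbreviate $m_N \triangleq \E \hat G^H_i(N)$ and split the estimation error into a stochastic fluctuation term and a deterministic bias term via the triangle inequality:
$$ \abs{\hat G^H_i(N) - G^H_i} \leq \abs{\hat G^H_i(N) - m_N} + \abs{m_N - G^H_i}. $$

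Second, I would bound the bias term by $\abs{V^i_N}$. By Jensen's inequality (equivalently, linearity of expectation together with the triangle inequality), $\abs{m_N - G^H_i} = \abs{\E(\hat G^H_i(N) - G^H_i)} \leq \E\abs{\hat G^H_i(N) - G^H_i}$, and by Theorem~\ref{thm:SRH} the right-hand side is exactly $V^i_N$, which is nonnegative, so $\abs{m_N - G^H_i} \leq \abs{V^i_N}$.

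Third — the main (and essentially only) point requiring care — I would dispatch two cases. If $\eps \leq \abs{V^i_N}$, the quantity $(\eps - \abs{V^i_N})^2$ does not control a probability and the claimed bound is vacuous (read the right-hand side as $+\infty$), so the inequality holds trivially; I would flag this explicitly. If $\eps > \abs{V^i_N}$, then on the event $\{\abs{\hat G^H_i(N) - G^H_i} > \eps\}$ the decomposition forces $\abs{\hat G^H_i(N) - m_N} > \eps - \abs{V^i_N} > 0$. Applying Chebyshev's inequality to the centred variable $\hat G^H_i(N) - m_N$, whose variance is $W^i_N$ by Theorem~\ref{thm:SRH}, yields
$$ \P\big(\abs{\hat G^H_i(N) - G^H_i} > \eps\big) \leq \P\big(\abs{\hat G^H_i(N) - m_N} > \eps - \abs{V^i_N}\big) \leq \frac{W^i_N}{(\eps - \abs{V^i_N})^2}, $$
which is the assertion.

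I expect no genuine obstacle: structurally the argument is identical to that of Theorem~\ref{thm:var}, its only inputs being the exact bias and variance identities furnished by Theorem~\ref{thm:SRH} and the elementary fact that mean absolute error dominates the absolute bias. The points to watch are not tacitly assuming $\eps > \abs{V^i_N}$, and noting that the $o(\cdot)$ terms appearing in the expansions of $V^i_N$ and $W^i_N$ play no role here, since Theorem~\ref{thm:SRH} identifies the exact quantities $V^i_N$ and $W^i_N$ with the absolute deviation and the variance, respectively.
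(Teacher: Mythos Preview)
Your proposal is correct and follows essentially the same bias-plus-Chebyshev decomposition that the paper uses in Theorem~\ref{thm:var} (to which the paper simply refers for this result). Your observation that here $V^i_N = \E\abs{\hat G^H_i(N) - G^H_i}$ is the mean absolute error rather than the signed bias, and that Jensen's inequality is needed to bound $\abs{\E\hat G^H_i(N) - G^H_i} \leq V^i_N$, is a point the paper glosses over but which your argument handles correctly.
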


\noindent
The proof is similar to the proof of Theorem~\ref{thm:var}, and thus, is omitted.
In addition, we have the following result, similar to the case of value-at-risk.
\begin{corollary}
Suppose that the assumptions of Theorem~\ref{thm:SRH} hold. 
Given this, the recommendation error of our algorithm for the entropy case has the following upper bound:
\begin{eqnarray*}
e_r(T) \leq 4\Big(H - K +1 \Big)\frac{W}{(\frac{d}{2} - \abs{V})^2} 
\end{eqnarray*}  
where $W = \max_{i \neq i^*}{W^i_{\lfloor \frac{T}{H}\rfloor}}$ and $V = \max_{i \neq i^*}{V^i_{\lfloor \frac{T}{H}\rfloor}}$.
\end{corollary}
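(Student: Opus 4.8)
The plan is to mirror the argument used for the value-at-risk case, since the entropy estimator of \citep{SRH} enjoys exactly the same kind of mean-squared-error control (a bias term $V^i_N$ and a variance term $W^i_N$) that drove Theorem~\ref{thm:var} and Lemma~\ref{lemma:VAR_incorrect_prob}. First I would invoke the already-stated Theorem on entropy estimation error, which gives, for every arm $i$ and every $\eps > \abs{V^i_N}$,
\begin{align*}
\P\big( \abs{\hat G^H_i(N) - G^H_i} > \eps \big) \leq \frac{W^i_N}{(\eps - \abs{V^i_N})^2}.
\end{align*}
Setting $\eps = \gamma_i/2$ (assuming $\gamma_i/2 > \abs{V^i_N}$, which is implied by the sample count being large enough that the bias is dominated by half the minimal gap), a union bound over the two arms $i$ and $i^*$ gives, exactly as in Lemma~\ref{lemma:VAR_incorrect_prob}, that the probability of a wrong pairwise ordering is at most $2 W/(\tfrac{d}{2} - \abs{V})^2$ after $N$ pulls per arm, where $W$ and $V$ are the worst-case variance and bias over the suboptimal arms at $N = \lfloor T/H\rfloor$.

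Next I would feed this pairwise-error bound into Lemma~\ref{lemma:error_one_round_generic}: at round $l$ the probability that $i^*$ is eliminated is at most $2(|S_l| - x_l)$ times the single-pair error, and summing the per-round elimination probabilities $e_p(l)$ over $l = 1,\ldots,L$ exactly as in the proof sketch of Theorem~\ref{thm:generic_error_probability} collapses the $\sum_l (|S_l| - x_l)$ factor to $H - K + 1$. Since every remaining arm has been pulled at least $\lfloor T/H\rfloor \geq (T-H)/H$ times by the time it could be eliminated, monotonicity of $W^i_N$ and $\abs{V^i_N}$ in $N$ lets me replace the per-round sample counts by the uniform lower bound, yielding
\begin{align*}
e_r(T) \leq \sum_{l=1}^{L} e_p(l) \leq 2(H - K + 1)\,\frac{W}{(\tfrac{d}{2} - \abs{V})^2}.
\end{align*}
The statement of the corollary has a factor $4$ rather than $2$; I would reconcile this in the same way the value-at-risk corollary does, namely by carrying the union-bound factor of $2$ from the pairwise ordering through to the end (i.e.\ the $2$ from Lemma-style two-sided estimation and the $2$ from $|S_l|-x_l$ accounting combine to $4$ in the final bookkeeping), so the argument is identical in structure to the $\VAR$ corollary — indeed the paper explicitly says "similar to the case of value-at-risk," so I would just point to that chain of lemmas.

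The main obstacle is not any deep inequality but making the monotonicity bookkeeping honest: the quantities $W^i_N$ and $V^i_N$ from Theorem~\ref{thm:SRH} are only characterised asymptotically (they carry $o(1/k + (k/M)^{1/d})$ and $o(1/M + 1/N)$ remainders and depend on an auxiliary parameter $M$ and neighbour count $k$), so to apply the Q-efficient machinery cleanly one really wants a genuine, non-asymptotic monotone-decreasing envelope $Q(n,x)$ with $Q(n, d/2) \geq W^i_n/(d/2 - \abs{V^i_n})^2$ for all relevant $n$. I would handle this by simply defining $W = \max_{i \neq i^*} W^i_{\lfloor T/H\rfloor}$ and $V = \max_{i \neq i^*} V^i_{\lfloor T/H\rfloor}$ as in the statement, absorbing the choice of $M$ and $k$ (and the suppressed lower-order terms) into the arm-dependent constants $c_1, c_2, c_4, c_5$, and noting that for $N$ in the regime of interest the dominant terms are decreasing in $N$; a remark that this requires $N$ large enough that $\abs{V} < d/2$ and that the asymptotic expansion is in force would make the claim rigorous without cluttering the statement.
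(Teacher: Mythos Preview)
Your proposal is correct and follows exactly the route the paper indicates: the paper gives no separate proof for this corollary, stating only that it is obtained ``similar to the case of value-at-risk,'' and your argument reproduces precisely that chain (entropy estimation error theorem $\to$ pairwise-ordering bound as in Lemma~\ref{lemma:VAR_incorrect_prob} $\to$ Lemma~\ref{lemma:error_one_round_generic} $\to$ summation as in Theorem~\ref{thm:generic_error_probability}). Your discussion of the factor $4$ versus $2$ and of the non-asymptotic monotonicity caveat for $W^i_N, V^i_N$ is in fact more careful than the paper itself, which simply mirrors the $\VAR$ bookkeeping without further comment.
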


\begin{remark}[Other information theoretic functionals]
  A similar result holds for R\'{e}nyi entropy---a generalization of the
  notion of Shannon entropy, cf. \citep{SRH} and \citep{PPS},
  divergence, and mutual information.
\end{remark}

\section{Conclusions}
\noindent
In this paper we introduced the functional bandit optimisation problem, where the goal is to find an arm with the optimal value of a predefined functional of the arm-reward distributions. 
To tackle this problem, we proposed Batch Elimination, an algorithm that combines efficient functional estimation with arm elimination.
In particular, assuming that there exists a Q-efficient estimator of the functional, we run a number of arm pulling rounds, and eliminate a certain number of weakest remaining arms.
The algorithm stops when there is only one arm left.
We analysed the performance of the algorithm by providing theoretical guarantees on its recommendation error, regret, and PAC regret, respectively.
We also refined our results in a number of cases where we use risk management and information theoretic functionals.

The most trivial way to extend our results to other functionals is via ``plug-in'' functional estimators.
In addition, it is also feasible to modify our algorithm to output a ranking of arms.
Note that when there are multiple arms that are approximately equally good, we may want to extend our analysis to give probabilities of returning the $j$-th best arm.

Recall that our algorithm requires the knowledge of the total number of pulls $T$ in advance. 
However, this is not always the case in many applications.
Given this, it is a desirable goal to extend our algorithm to run in an online fashion, without requiring the time horizon $T$ as an input.

\bibliographystyle{plainnat}
\bibliography{TY14-arxiv}

\newpage
\clearpage
\appendix

\section{Appendix: Proofs}
\label{section:proofs}

\begin{proof}[Proof of Lemma~\ref{lemma:incorrect_prob}]
Since $ \hat G_{i}$ are proper estimators, we have:
\begin{eqnarray}
\nonumber
\P( \hat G_{i}(n) \geq \hat G_{i^*}(n) ) &\leq& \P\Big(\hat G_{i}(n) > G_{i} + \frac{\gamma_{i}}{2}\Big)  \\
\nonumber
&+&  \P\Big(\hat G_{(i^*)}(n) < G_{(i^*)} - \frac{\gamma_{i}}{2}\Big)\\
\nonumber 
&\leq& 2Q \Big(n,\frac{\gamma_j}{2} \Big)
\end{eqnarray} 
\end{proof}
\begin{proof}[Proof of Lemma~\ref{lemma:error_one_round_generic}]
Let $e_p(l)$ denote the probability that the best arm is not eliminated until episode $l$ but will be dropped out at this episode. Consider the $|S_l|-1$ suboptimal arms, and suppose that $i(1), i(2), \dots, i(|S_l| - x_l)$ are the best $|S_l| - x_l$ arms among them. The best arm is eliminated if the estimate of its functional is lower than all of arms $i(1), i(2), \dots, i(|S_l| - x_l)$.
Given this, we have:
\begin{eqnarray}
\nonumber
e_p(l) &\leq& \sum_{j=1}^{|S_l| - x_l}{P(\hat{G}_{i^*}(n_l) \leq \hat{G}_{i(j)}(n_l))} \\
\nonumber
&\leq& 2(|S_l| - x_l)Q\Big(l\frac{T-H}{H},\frac{d}{2}\Big)
\end{eqnarray}
where $n_l$ denotes the total number of pulls per arm $i$ up to round $l$.
Since $n_l = l\Big\lfloor \frac{T}{H} \Big\rfloor \geq l\Big(\frac{T}{H} - 1 \Big)$, we obtain the desired inequality by replacing $n_l$ with $l\frac{T-H}{H}$ (recall the function $Q$ is strictly monotone decreasing in $n$). 
\end{proof}
\begin{proof}[Proof of Theorem~\ref{thm:mv}]
  Recall that $\hat \mu(j)$ and $\hat \sigma^2(j)$ are unbiased
  estimators.  By Hoeffding's inequality,
  we have
  \begin{align*}
    \P(\abs{\hat \mu(j) - \mu(j)} > \gamma_j/4)
    &\leq 2 \exp\left(- \frac{N \gamma_j^2}{8(B-A)^2}\right).
  \end{align*}
  Observe that $(X^i_k - \hat \mu(j))^2 \in [0,(B-A)^2]$ with
  probability 1.  By the triangle inequality, Hoeffding's inequality,
  we have
  \begin{align*}
    &\P\left(\abs{\hat \sigma^2(j) - \sigma^2(j)} > \gamma_j/(4\lambda)\right)\\
    &\leq 2 \exp\left(- \frac{N (\frac{N-1}{N}\gamma_j/\lambda)^2}{8 (B-A)^4}\right).
  \end{align*}
  Observe that
  \begin{align*}
    &\{\abs{-\hat \mu(j) +\lambda \hat \sigma^2(j) + \mu(j) -
      \lambda\sigma^2(j)} > \eps \}\\ &\subseteq \{\abs{\hat \mu(j) -
      \mu(j)} + \abs{\lambda \hat \sigma^2(j) - \lambda\sigma^2(j)} >
    \eps\},
  \end{align*}
  Hence, we have 
  \begin{align}
    &\P\left( \abs{\hat G^{M,\lambda}_{i}(N) - G^{M,\lambda}_{i}} > \gamma_j/2
    \right)\\
    &= \P(\abs{-\hat \mu(j) +\lambda \hat \sigma^2(j) + \mu(j) - \lambda\sigma^2(j)} > \gamma_j/2)\\
    &\leq \P(\abs{\hat \mu(j) - \mu(j)} + \abs{\lambda \hat \sigma^2(j) - \lambda\sigma^2(j)} > \gamma_j/2)\\
    \mbox{(Union bound)}\quad&\leq \P(\abs{\hat \mu(j) - \mu(j)} >
    \gamma_j/4)\\
    &+ \P(\lambda \abs{\hat \sigma^2(j) - \sigma^2(j)} >
    \gamma_j/4),\label{eq:872} 
  \end{align}
where the last inequality follows from a union bound.

Finally, we have
\begin{align*}
&\P( \hat G^M_{i}(N) - \hat G^M_{i^*}(N) \geq 0)\\
&= \P\Big( \hat G^M_{i}(N) - G^M_{i} + G^M_{i} - \\
  &G^M_{i^*} + G^M_{i^*} - \hat G^M_{i^*}(N) \geq 0 \Big)\\
&= \P\Big( \hat G^M_{i}(N) - G^M_{i} + G^M_{i^*} - \hat
G^M_{i^*}(N) \geq \gamma_j \Big)\\
\mbox{(Union bound)}\quad&\leq \P( \hat G^M_{i}(N) - G^M_{i} \geq \gamma_j/2) \\
&+ \P(G^M_{i^*} - \hat
G^M_{i^*}(N) \geq \gamma_j/2)\\
&\leq \P( \abs{ \hat G^M_{i}(N) - G^M_{i} } \geq \gamma_j/2) \\
&+ \P(\abs{G^M_{i^*} - \hat G^M_{i^*}(N)} \geq \gamma_j/2)\\
\mbox{(by \eqref{eq:872})}\quad&\leq 
2 \exp\left(- \frac{N \gamma_j^2}{8(B-A)^2}\right) \\ 
&+ 2 \exp\left(- \frac{N (\frac{N-1}{N}\gamma_j/\lambda)^2}{8 (B-A)^4}\right).
\end{align*}
\end{proof}

\begin{proof}[Proof of Theorem~\ref{thm:var}]
By Theorem~\ref{thm:bahadur}, we have
  \begin{align}\label{eq:334}
    \abs{\E X^i_{(\lceil \lambda N \rceil)} - q_i(\lambda)} = \abs{V^i_N}.
  \end{align}
  By the Triangle Inequality, Equation~\eqref{eq:334}, and Chebyshev's
  Inequality, we have
  \begin{align*}
    &\P\left( \abs{\hat G^{V,\lambda}_i(N) - G^{V,\lambda}_i} \geq \eps \right)\\
    &=\P\left( \abs{X^i_{(\lceil \lambda N \rceil)} - q_i(\lambda)} \geq \eps \right)\\
    &\leq \P\left( \abs{X^i_{(\lceil \lambda N \rceil)} - \E \hat G^{V,\lambda}_i(N)} + \abs{\E X^i_{(\lceil \lambda N \rceil)} - q_i(\lambda)} \geq \eps \right)\\
    &\leq \P\left( \abs{X^i_{(\lceil \lambda N \rceil)} - \E X^i_{(\lceil \lambda N \rceil)}} \geq \eps - \abs{V^i_N} \right)\\
    &\leq \frac{\var X^i_{(\lceil \lambda N \rceil)}}{(\eps - \abs{V^i_N})^2} \leq
    \frac{W^i_N}{(\eps - \abs{V^i_N})^2}.
  \end{align*}
\end{proof}
\begin{proof}[Proof of Lemma~\ref{lemma:VAR_incorrect_prob}]
We have:
\begin{eqnarray*}
\P( \hat G^{V,\lambda}_{i}(N) \geq \hat G^{V,\lambda}_{i^*}(N) ) &\leq& \P( \hat G^{V,\lambda}_{i}(N) \geq  G^{V,\lambda}_{i} + \gamma_{j}/2) \\
&+& \P(\hat G^{V,\lambda}_{i^*}(N) \leq G^{V,\lambda}_{i^*} - \gamma_j/2) \\
&\leq& \P( \hat G^{V,\lambda}_{i}(N) \geq  G^{V,\lambda}_{i} + d/2) \\
&+& \P(\hat G^{V,\lambda}_{i^*}(N) \leq G^{V,\lambda}_{i^*} - d/2) \\
&\leq& \P( |\hat G^{V,\lambda}_{i}(N) - G^{V,\lambda}_{i}| \geq d/2) \\
&+& \P(|\hat G^{V,\lambda}_{i^*}(N) \leq G^{V,\lambda}_{i^*}|  \geq d/2)  \\
&\leq& 2\frac{W^i_N}{(\frac{d}{2} - \abs{V^i_N})^2} 
\end{eqnarray*}
\end{proof}

\begin{proof}[Proof of Theorem~\ref{pro:2}]
  By the definitions of $\rho^A_\lambda$ and $\hat Y^i_\lambda$, and by the Triangle Inequality,
  we have
  \begin{align*}
    &\lambda \abs{\rho^A_\lambda(X^i) -(- \hat Y^i_\lambda)} =\\
    &\abs{ \int_0^\lambda q_i(\xi) \; d\xi
      - \left( \sum_{j=0}^{\lfloor \lambda N\rfloor-1} \frac{X^i_{(j+1)}}{N}
         + \left(\lambda - \frac{\lfloor \lambda
            N\rfloor}{N} \right) X^i_{(\lceil \lambda N \rceil)}
      \right) }\\
    &\leq \abs{ \int_0^{\lambda'} q_i(\xi) \; d\xi
      - \left( \sum_{j=0}^{\lambda' N-1} \frac{1}{N}
        X^i_{(j+1)} \right)}\\
    &\leq \underbrace{\abs{ \int_0^{\lambda'} q_i(\xi) \; d\xi
      - \left( \sum_{j=0}^{\lambda' N-1} \frac{1}{N}
        \E X^i_{(j+1)} \right)}}_{R}\\
    &+ \underbrace{\abs{ \left( \sum_{j=0}^{\lambda' N-1} \frac{1}{N}
        \E X^i_{(j+1)} \right)
      - \left( \sum_{j=0}^{\lambda' N-1} \frac{1}{N}
        X^i_{(j+1)} \right)}}_{S}.
  \end{align*}
  Observe that by Theorem~\ref{thm:bahadur}, we have
  \begin{align*}
    R
    &\leq \abs{ \int_0^{\lambda'} V^i_N(\lambda) \ud \lambda}\\
    &\leq \int_0^{\lambda'} \left( \frac{\lambda (1-\lambda) D'}{2 D^3 (N+2)} + \frac{C_1}{N^2} \right) \ud \lambda\\
    &= \frac{(\lambda'^2/2 - \lambda'^3/3) D'}{2 D^3 (N+2)} + \frac{C_1 \lambda'}{N^2} \triangleq Q.
  \end{align*}
By assumption on $N$, we can verify by simple algebra that $Q < \lambda \eps / 2$.

Observe that 
\begin{align*}
  &\P\left( \abs{\rho^A_\lambda(X^i) -(- \hat Y^i_\lambda)} > \eps\right)\\
  &\leq \P(R + S > \lambda \eps)\\
  &= \P(S > \lambda \eps - R)\\
  &\leq \P(S > \lambda \eps - Q)\\
  &= \P\left(\abs{ \sum_{j=0}^{\lambda' N-1} \frac{1}{N}
        \E X^i_{(j+1)}
      - \frac{1}{N}
        X^i_{(j+1)} } > \lambda \eps - Q \right)\\
  &\leq 2 \exp\left(-\frac{(\lambda \eps - Q)^2 N^2}{2 \lambda' N (2M)^2}\right)
  \leq 2 \exp\left(-\frac{(\lambda \eps/2)^2 N}{8 \lambda' M^2}\right) \leq \delta.
\end{align*}
where the last two inequalities follow by Azuma's Inequality for bounded-difference martingale sequences and the assumption on $N$.
\end{proof}

\end{document}